\def\ourmethod{SEER\xspace}
\def\PbRL{preference-based RL\xspace}
\def\st{s_t}
\def\at{a_t}
\def\stt{s^\prime}
\def\att{a^\prime}
\def\empiricalQ{\widehat{Q}}
\def\eqref#1{equation~\ref{#1}}
\def\1{\bm{1}}
\DeclareMathAlphabet{\mathsfit}{\encodingdefault}{\sfdefault}{m}{sl}
\SetMathAlphabet{\mathsfit}{bold}{\encodingdefault}{\sfdefault}{bx}{n}
\def\gA{{\mathcal{A}}}
\def\gB{{\mathcal{B}}}
\def\gD{{\mathcal{D}}}
\def\gE{{\mathcal{E}}}
\def\gG{{\mathcal{G}}}
\def\gJ{{\mathcal{J}}}
\def\gL{{\mathcal{L}}}
\def\gN{{\mathcal{N}}}
\def\gO{{\mathcal{O}}}
\def\gP{{\mathcal{P}}}
\def\gR{{\mathcal{R}}}
\def\gS{{\mathcal{S}}}
\def\gT{{\mathcal{T}}}
\def\gV{{\mathcal{V}}}
\newcommand{\E}{\mathbb{E}}
\newcommand{\KL}{D_{\mathrm{KL}}}
\newcommand{\stdv}[1]{\scalebox{.70}{~$\pm$~#1}}
\def\myscriptsize{\@setfontsize\scriptsize{8.3pt}{9.5pt}}
\theoremstyle{plain}
\newtheorem{theorem}{Theorem}[section]
\newtheorem{lemma}[theorem]{Lemma}
\theoremstyle{definition}
\theoremstyle{remark}
\title{Efficient Preference-based Reinforcement Learning via Aligned Experience Estimation}
\author{%
  Fengshuo Bai$^{1,2,3}$
  Rui Zhao$^{4,\dag}$\thanks{$^{\dag}$Corresponding authors, contact Lei Han$<$\textit{lxhan@tencent.com}$>$.} \,
  Hongming Zhang$^{5}$
  Sijia Cui$^{6}$\\
  \textbf{Ying Wen$^{1}$ Yaodong Yang$^{2,\dag}$ Bo Xu$^{6}$ Lei Han$^{4, \dag}$}\\
  $^{1}$Shanghai Jiao Tong University \\
  $^{2}$Institute for Artificial Intelligence, Peking University \\
  $^{3}$National Key Laboratory of General Artificial Intelligence, BIGAI\\
  $^{4}$Tencent Robotics X $^{5}$University of Alberta\\
  $^{6}$Institute of Automation, Chinese Academy of Sciences\\
  % examples of more authors
  % \And
  % Coauthor \\
  % Affiliation \\
  % Address \\
  % \texttt{email} \\
  % \AND
  % Coauthor \\
  % Affiliation \\
  % Address \\
  % \texttt{email} \\
  % \And
  % Coauthor \\
  % Affiliation \\
  % Address \\
  % \texttt{email} \\
  % \And
  % Coauthor \\
  % Affiliation \\
  % Address \\
  % \texttt{email} \\
}
\begin{document}

\maketitle

\begin{abstract}
  Preference-based reinforcement learning (PbRL) has shown impressive capabilities in training agents without reward engineering. However, a notable limitation of PbRL is its dependency on substantial human feedback. This dependency stems from the learning loop, which entails accurate reward learning compounded with value/policy learning, necessitating a considerable number of samples. To boost the learning loop, we propose SEER, an efficient PbRL method that integrates label smoothing and policy regularization techniques. Label smoothing reduces overfitting of the reward model by smoothing human preference labels. Additionally, we bootstrap a conservative estimate $\widehat{Q}$ using well-supported state-action pairs from the current replay memory to mitigate overestimation bias and utilize it for policy learning regularization. Our experimental results across a variety of complex tasks, both in online and offline settings, demonstrate that our approach improves feedback efficiency, outperforming state-of-the-art methods by a large margin. Ablation studies further reveal that SEER achieves a more accurate Q-function compared to prior work.
\end{abstract}

\section{Introduction}
Deep Reinforcement Learning (DRL) has recently demonstrated remarkable proficiency in enabling agents to excel in complex behaviors across diverse domains, including robotic control and manipulation~\citep{lillicrap2015continuous, 7989385}, game playing~\citep{Dmnih2013playing, VinyalsBCMDCCPE19}, and industrial applications~\citep{xu2023drl}. The foundation of success lies in providing a well-designed reward function. However, setting up a suitable reward function has been challenging for many reinforcement learning applications~\citep{8202141, 7989307}. The quality of the reward function depends heavily on the designer's understanding of the core logic behind the problem and relevant background knowledge. For example, formulating a reward function for text generation presents a significant challenge due to the inherent difficulty in quantifying text quality on a numerical scale~\citep{abs-2109-10862, NEURIPS2022_b1efde53}. Despite the substantial efforts of expert engineers in reward engineering, previous research~\citep{dp_blog_2020, NEURIPS2022_3d719fee} has highlighted various challenges, such as ``reward hacking''. In these scenarios, agents focus solely on maximizing their rewards by exploiting misspecification in the reward function, often leading to unintended and potentially problematic behaviors.

Recently, PbRL has gained widespread attention and has fruitful outcomes~\citep{lee2021pebble, park2022surf, NEURIPS2022_8be9c134}. Rather than relying on hand-engineered reward functions, humans can provide preference between a pair of agent trajectories, thereby implicitly indicating the desired behaviors or the task's objectives. Recent research has demonstrated that PbRL can train agents to perform novel behaviors and mitigate the challenges of reward hacking to some extent. However, existing methods still suffer from feedback inefficiency, limiting the applicability of PbRL in practical scenarios. Taking a closer look at PbRL, it involves collecting preference labels, learning a reward model from preferences, optimizing policy with the reward model, and subsequently generating higher-quality trajectories for the next iteration, thereby creating a virtuous circle. Prior research~\citep{abs-2305-15363, NEURIPS2022_8be9c134} observes that the intrinsic inefficiency of reward learning mechanisms results in an increase in feedback requirements for PbRL. Specifically, insufficient preference label leads to an imprecise reward model.
This inaccuracy may cause the Q function to be misled by the erroneous outputs of the reward model, resulting in suboptimal policy, a phenomenon often referred to as confirmation bias~\citep{pham2021meta_pseudo_labels}. Further, the data coverage in the replay memory is limited to a tiny subset of the whole state-action space. When combined with deep neural networks, the extrapolation of function approximation may erroneously overestimate out-of-distribution state-action pairs to have unrealistic values, and the errors will be back-propagated to previous states~\citep{fujimoto2019off, kumar2019stabilizing, levine2020offline}. Due to an inaccurate reward model compounded with overestimation bias, the Q function drives a suboptimal policy, which deteriorates the learning cycle and leads to poor performance.

In this work, we present \ourmethod, an efficient framework via aligned estimation from experience for preference-based reinforcement learning.
Our approach integrates two complementary techniques: \textit{label smoothing} and \textit{policy regularization}. In terms of label smoothing, we smooth human preference labels to mitigate overfitting during reward learning. During RL training, we estimate a conservative $\widehat{Q}$ using only the transitions in the replay memory, effectively reducing overestimation bias. This estimate is then employed to regularize the policy learning using Kullback-Leibler (KL) divergence. 
Our empirical evaluations, conducted on a range of complex tasks in both online and offline environments, demonstrate that \ourmethod markedly surpasses baselines. This advantage is particularly obvious when the available human feedback is limited, showcasing \ourmethod's effectiveness in leveraging limited data to achieve superior performance.

The key \textbf{contributions} of our work are as follows:
\textbf{(1)} We propose \ourmethod, a novel feedback-efficient preference-based RL algorithm that combines policy regularization with label smoothing, facilitating efficient and effective learning.
\textbf{(2)} Experiments demonstrate that our method outperforms other state-of-the-art PbRL methods and substantially improves feedback efficiency across a variety of complex tasks, both in online and offline settings.
\textbf{(3)} We demonstrate that, benefiting from policy regularization and label smoothing, our approach shows a clear advantage over PEBBLE, particularly in scenarios with limited human feedback. We also show that \ourmethod can train an accurate Q function and a better policy.

\section{Preliminaries}
\noindent \textbf{Preference-based Reinforcement Learning.}
In the RL paradigm, a finite Markov decision process (MDP) is defined by the tuple $\langle \gS, \gA, \gR, \gP, \gamma \rangle$. This comprises the state space $\gS$, action space $\gA$, transition dynamics, reward function $\gR$, and discount factor $\gamma$. The transition probability $\gP(s^\prime|s,a)$ characterizes the stochastic nature of the environment, indicating the probability of transitioning to state $s^\prime$ upon taking action $a$ in state $s$. $\gR(s,a)$ specifies the reward received for performing action $a$ in state $s$. The policy $\pi(a|s)$ maps the state space to the action space. The objective of the agent is to collect trajectories by interacting with the environment, aiming to maximize the expected return.

In the PbRL framework outlined by ~\citet{NIPS2017_d5e2c0ad}, the reward function from reward engineering is replaced by a reward function estimator, $\widehat{r}_{\psi}$, which is learned to align with human preferences. In this context, a segment $\sigma$ is defined as a sequence of states and actions, denoted as $(s_{t+1},a_{t+1},\cdots, s_{t+k},a_{t+k})$. A human provides a preference $y$ for a pair of segments $(\sigma^0,\sigma^1)$, where $y$ is a distribution over $\{(1,0),(0,1),(0.5,0.5)\}$. Following the Bradley-Terry model~\citep{pref-model-orig}, a preference predictor based on the estimated reward function $\widehat{r}_{\psi}$ is formulated as:
\begin{equation}
  P_{\psi}[\sigma^0\succ\sigma^1] = \text{Sigmoid} \Big(
  \sum_{t}\widehat{r}_{\psi}(s_t^0, a_t^0)-\sum_{t}\widehat{r}_{\psi}(s_t^1, a_t^1)
  \Big),
\label{eq:preference_r}
\end{equation}
where $\sigma^0 \succ \sigma^1$ indicates $\sigma^0$ is more consistent with human expert expectations than $\sigma^1$. The reward model is trained by minimizing the cross-entropy loss between predictions from the preference predictor and actual human preferences:
\begin{equation}
  \mathcal{L}_{\text{reward}}(\psi) = -\underset{(\sigma^0,\sigma^1,y)\sim \mathcal{D}}{\mathbb{E}} 
  \Big[ \sum_{i=0}^{1}y(i)\log P_\psi[\sigma^i\succ\sigma^{1-i}] \Big].
\label{eq:reward_loss}
\end{equation}
By optimizing $\widehat{r}_{\psi}$ with respect to this loss, segments that align more closely with human receive a higher return.

% =========================
\noindent \textbf{Twin Delayed DDPG.} 
TD3~\citep{pmlr-v80-fujimoto18a} introduces several key techniques for improvement. It employs the minimum value between two Q-networks to address the issue of overestimation resulting from function approximation errors. Delayed policy updates and target policy smoothing further refine the training process. TD3 learns two Q-functions, $Q_{\theta_1}$ and $Q_{\theta_2}$, by minimizing the mean square Bellman error. Concurrently, it optimizes the policy $\pi_\phi$ by maximizing the Q-function values.

For optimizing $Q_\theta$, it minimizes the temporal difference errors. The target value $y$ for the Q networks is defined as:
\begin{equation}
    \begin{aligned}
        y(\stt, &r) = r + \gamma\min\limits_{i=1,2} Q_{\bar\theta_i}(\stt,\att(\stt)),
        & \att(\stt) = \text{clip} \Big( \pi_{\bar\phi}(\stt) + \text{clip}(\epsilon, -c, c),a_{\text{L}},a_{\text{H}} \Big),
    \end{aligned}
\end{equation}
where $\bar\theta_1$, $\bar\theta_2$, and $\bar\phi$ denote the target networks for Q and policy networks, respectively. $\gamma$ is the discount factor, $\epsilon$ is clipped exploration noise from $\gN(0,\sigma^2)$ and clipped by threshold $c$, with $a_{\text{L}},a_{\text{H}}$ indicating the valid action range. Both Q-functions are trained by regressing to this target:
\begin{equation}
    J_Q(\theta_i) = \E_{(s,a,r,\stt)\sim\gD} \Big[ \left( Q_{\theta_i}(s,a)-y(\stt,r) \right)^2 \Big],
\label{eq:q_loss}
\end{equation}

In updating the policy, the objective is to learn a policy $\pi_\phi$ that maximizes $Q_{\theta_1}$. As the Q-function is differentiable with respect to action, gradient ascent can be applied:
\begin{equation}
    \begin{aligned}
        J_\pi(\phi) = \mathop{\E}\limits_{s \sim \gD}
        \big[ Q_\theta(s, \pi_\phi(s) ) \big], 
    \end{aligned}
\label{eq:pi_loss}
\end{equation}
where parameters $\theta$ are considered constant. 

In our work, we choose DQN~\citep{MnihKSGAWR13} and TD3 as the basic RL algorithms for discrete and continuous settings, respectively.

\begin{figure*}[t]
% \vspace{-0.1em}
\begin{center}
    \includegraphics[width=0.8\linewidth]{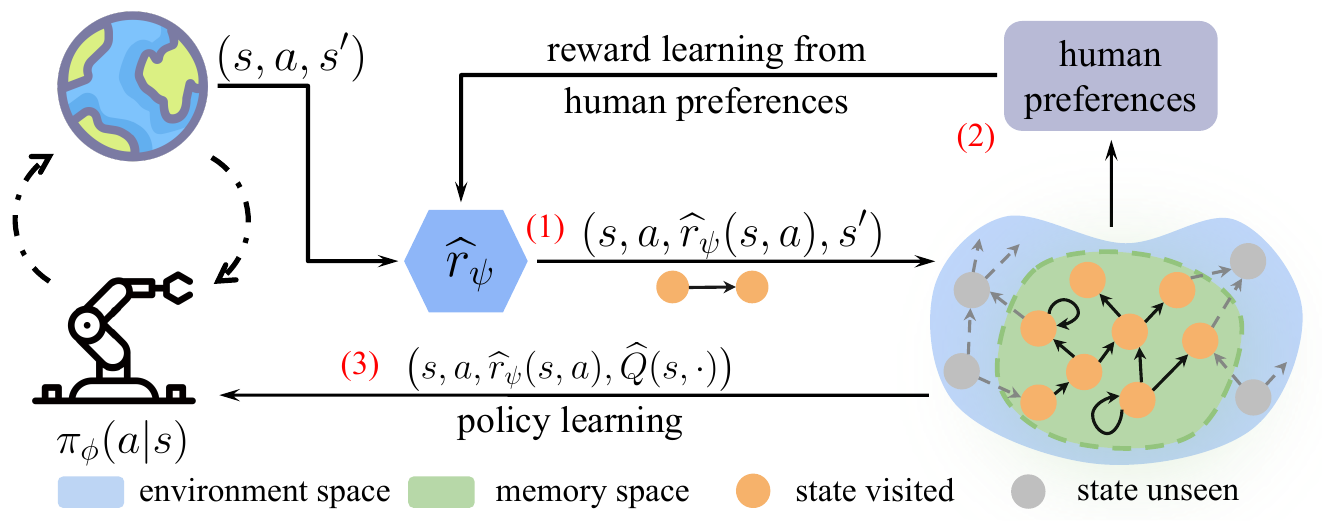}
    % \vspace{-0.5em}
    \caption{An illustration of \ourmethod. 
    (1) Label rewards using $\widehat{r}_{\psi}$.
    (2) Update the reward model $\widehat{r}_{\psi}$ with smoothed preference labels.
    (3) Estimate conservative $\widehat{Q}$ and regularize $Q_\theta$ and $\pi_\phi$.
    }
\label{fig:framework}
\end{center}
\vspace{-1.7em}
\end{figure*}

\section{Method}
In this section, we introduce \ourmethod, a generic framework designed to integrate with any PbRL approach, enhancing feedback efficiency. Figure \ref{fig:framework} illustrates the overview of our method. We will detail the three core components of \ourmethod: human label smoothing, conservative estimate $\widehat{Q}$ and policy regularization. The detailed procedure of our algorithm is outlined in Algorithm~\ref{pseudo_code} (online settings) and ~\ref{pseudo_code_offline} (offline settings).

\subsection{Reward Learning}
Revisiting the loss~(\ref{eq:reward_loss}) for optimizing the reward model in the framework of PbRL, it is a cross-entropy loss denoting the distance between the true distribution and predicted distribution. But when we assume all segment pairs in the preference dataset hold $\sigma^0 \succ \sigma^1$. In practice, human preference labels tend to exhibit strong polarization, often taking the form of $(1,0)$ or $(0,1)$. Consequently, the loss (\ref{eq:reward_loss}) simplifies to minimizing the negative logarithmic predicted probability of $\sigma^0 \succ \sigma^1$. The preference predictor~\ref{eq:preference_r} provided by the Bradley-Terry model~\citep{pref-model-orig} is a sigmoid function. Therefore, when we attempt to optimize the reward model such that $P_{\psi}[\sigma^0 \succ \sigma^1] = 1$, we are essentially hoping for the condition $ \left( \sum_{t}\widehat{r}_{\psi}(s_t^0, a_t^0) - \sum_{t}\widehat{r}_{\psi}(s_t^1, a_t^1) \right) \rightarrow \infty$ to be satisfied. However, this is infeasible and can negatively impact the performance of the reward model. Over-fitting becomes a significant empirical issue, particularly in cases where the action spaces are extremely large, such as with continuous action spaces. Then we smooth the human label according to the following rule:
\begin{equation}
    y(i,j) = \left\{
        \begin{array}{lr}
        (1-\lambda, \lambda), & \text{if } \sigma^i \succ \sigma^j \\
        (0.5, 0.5), & \text{otherwise.}
        \end{array}
    \right.
\end{equation}
By applying a smoothing technique to the human preference labels, we aim to achieve a more accurate reward model during the policy learning process.

\subsection{Conservative Estimate $\widehat{Q}$}
By utilizing well-supported state-action pairs from the current replay memory, we bootstrap a conservative estimate $\widehat{Q}$. This conservative estimation provides two main advantages: firstly, it enables further exploitation of the information in the replay memory; secondly, it prevents overestimation caused by extrapolation in unseen states and actions. We build the replay memory as a graph for discrete action spaces, which is similar to the method used by \citet{Zhu2020Episodic,hong2022topological}. This allows us to rapidly perform conservative estimations without significant additional computational overhead. For continuous action spaces, we employ a neural network $Q_\xi$ to model $\empiricalQ$.

\subsubsection{Discrete Setting}
We structure the replay memory as a dynamic and directed graph, which is denoted as $\mathcal{G}=(\mathcal{V}, \mathcal{E})$. Each vertex in this graph represents a state $s$ along with its associated action value estimation $\widehat{Q}(s,\cdot)$, forming the vertex set: $\mathcal{V}=\{s| (s,\widehat{Q}(s,\cdot)) \}$. Each directed edge in the graph denotes a transition from state $s$ to state $s^\prime$ through action $a$. These edges also store the estimated reward $\widehat{r}_{\psi}(s, a)$ and the count of transitions $N(s, a, s^\prime)$, which are crucial for updating the model. The set of graph edges is represented as $\gE = \{s \stackrel{a} {\rightarrow}s^\prime| (a, \widehat{r}_{\psi}(s, a), N(s, a, s^\prime), \widehat{Q}(s,a)\}$. To ensure efficient querying, each vertex and edge is assigned a unique key via a hash function, achieving a query time complexity of $\gO(1)$. Additionally, every vertex $v$ contains an action set $\partial\gA(s)$, which includes the actions executed in state $s$, thus facilitating in-sample updates. Similar to conventional replay memory, this graph stores the most recent experiences, maintaining a fixed memory size.

This graph updating includes two primary components: estimation updating and reward relabeling. Upon observing a transition $(s, a, \widehat{r}_{\psi}(s, a), s^\prime)$, we add a new vertex and edge following the previously outlined data structure, initializing $\widehat{Q}(s, a)=0$ and $N(s, a, s^\prime)=1$. If an edge for the transition already exists, the visit count is incremented: $N(s, a, s^\prime) \leftarrow N(s, a, s^\prime) + 1$. For updating the action value estimate $\widehat{Q}$, a subset of graph vertices $\partial \gV \subseteq \gV$ is sampled in reverse order, similar to techniques described in \citet{abs-1910-08780, NEURIPS2019_e6d8545d}, to enable rapid and efficient updates. During the update process, the max-operator in the update rule is constrained to operate over $\partial\gA(s)$ instead of the entire action space, preventing visits to out-of-sample actions. Specifically, we update $\widehat{Q}$ using value iteration, defined by:
\begin{equation}
    \widehat{Q}(s,a) \leftarrow 
    \sum_{\stt \in \gS} \widehat{p}(\stt|s,a) \Big[ 
    \widehat{r}_{\psi}(s, a) + \gamma  \max\limits_{\att \in \partial\gA(\stt)} \widehat{Q}(\stt,\att) \Big],
\label{eq:value_iteration}
\end{equation}
where $\widehat{p}(\stt|s,a) =  N(s, a, \stt)/\sum_{\stt} N(s, a, \stt)$ represents the empirical dynamics within the graph. This update rule (\cref{eq:value_iteration}) ensures that the method never queries values for unseen actions, thereby preventing overestimation. For reward relabeling, we relabel all past experiences using the reward model $\widehat{r}_{\psi}$ each time this reward model is updated. This technique maximizes the use of historical transitions and reduces the impact of a non-stationary reward function.

\subsubsection{Continuous Setting}
For tasks involving continuous actions, using discretization techniques to transform continuous action spaces into discrete ones may be beneficial. For example, RT-2~\citep{brohan2023rt} employs this approach for discretizing actions in robotic applications, and it has also been successfully applied in complex games like DOTA~\citep{berner2019dota} and StarCraft~\citep{VinyalsBCMDCCPE19}.To make our method more comprehensive, we explore methods for estimating $\widehat{Q}$ in continuous action spaces. To achieve that, we propose an operator analogous to the constrained $\max$ operator in formula~(\ref{eq:value_iteration}) for conservative Q estimation. Drawing from previous work~\citep{NEURIPS2020_0d2b2061, kostrikov2022offline, garg2023extreme}, we use the log-sum-exp defined as:
\begin{equation}
    \gT^\beta Q_\xi(s, a) = \beta\log \E_{(s,a)\sim\gD} \left[ \frac{1}{\beta} \exp( Q_\xi(s,a) ) \right],
\label{eq:logsumexp_Q}
\end{equation}
where $\beta$ is a scaling parameter. For any $\beta_1 > \beta_2$, it follows that $\gT^{\beta_1}(Q_\xi) < \gT^{\beta_2} Q_\xi$. Moreover, $\gT^\infty Q_\xi  = \E \left[Q_\xi\right]$, and $\gT^0 Q_\xi = \sup(Q_\xi)$. Therefore, for any $\beta \in (0, \infty)$, the operator $\gT^{\beta} Q_\xi $ interpolates between the expectation and the maximum of $Q_\xi$. And we optimize the $Q_\xi$ by minimizing the Mean-squared-error (MSE) loss, defined as:
\begin{equation}
    \gJ_Q(\xi) =  \mathop{\E}\limits_{\tau_t \sim \gD}
    \left[ \left(Q_\xi(s,a) - \widehat{r}_{\psi}(s_t,a_t) -\gamma \gT^\beta Q_\xi(s_{t+1})\right)^2\right],
\label{eq:loss_q_xi}
\end{equation}
where $\tau_t = ( s_t, a_t, s_{t+1}, \widehat{r}_{\psi}(s_t,a_t))$ is the transition.

\subsection{Policy Learning}
In the above section, we develop a strategy for obtaining a conservative estimate $\widehat{Q}$. We now propose a policy regularizer tailored for both discrete and continuous settings.

Given that the goal of learning is to derive an optimal policy from the Q network, we use policy regularization to align policy $\pi$ with the current best policy $\widehat{\pi}$ derived from $\widehat{Q}$. Specifically, we regularize policy $\pi$ by minimizing the Kullback-Leibler (KL) divergence between $\widehat{\pi}$ and $\pi$, defined as:
\begin{equation}
    \gL_{reg} = \mathop{\E}\limits_{s \sim \gD}
        \Big[ \KL \Big(\pi(s) \Vert \widehat{\pi}(s) \Big) \Big].
\label{eq:pi_regularize}
\end{equation}

In scenarios with discrete actions, we define $\widehat{\pi}$ as the Boltzmann policy derived from $\widehat{Q}$, where $\widehat{\pi}(s) = \mathop{\textrm{Softmax}}_{a \in \partial\gA(s)}(\widehat{Q}(s,\cdot))$. This policy is inherently conservative, considering only the support set $\partial\gA(s)$ for a given state $s$. Similarly, the policy $\pi$, derived from the $Q_\theta$ network, is expressed as $\pi(s) = \mathop{\textrm{Softmax}}_{a \in \partial\gA(s)}(Q_\theta(s,\cdot))$. Total loss is defined as follows:
\begin{equation}
    \begin{aligned}
        \gL_{\textrm{discrete}}(\theta) &= 
        \mathop{\E}\limits_{\tau_t \sim \gG} \Big[(Q_\theta(s,a) - y )^2 \Big] +
        \eta \gL_{\text{reg}}(\theta)
    \end{aligned}
\label{eq:discrete_q_loss}    
\end{equation}
where $y = \widehat{r}_{\psi}(s, a) + \gamma \max\limits_{\att}Q(\stt,\att)$ is the Q target, $\tau_t = ( s_t, a_t, s_{t+1}, \widehat{r}_{\psi}(s_t,a_t))$ is the transition and $\eta$ is the weight factor.

In scenarios with continuous actions, we need a parameterized policy $\pi_\phi$ to interact with the environment. For policy learning, $Q_\xi$ is employed to regularize the policy $\pi_\phi$, with the parameters $\phi$ optimized by maximizing the following objective:
\begin{equation}
    \begin{aligned}
        J_\pi(\phi) &= \mathop{\E}\limits_{s_t \sim \gD} 
        \big[Q_\theta(s_t, \pi_\phi(s_t) )\big] + \eta\gL_{\text{reg}}(\phi),
        &\gL_{\text{reg}}(\phi) = \KL \Big( \pi_\phi(s_t)  \Big\Vert \frac{\exp \left(Q_\xi(s_t,\cdot)\right)}{Z} \Big),
    \end{aligned}
\label{eq:continuous_pi_loss}
\end{equation}
where $\eta$ is the weight of the policy regularizer, the partition function $Z$ normalizes the distribution and the parameters $\theta$ and $\xi$ are considered constants.

\subsection{Theoretical Analysis}
We conduct a theoretical analysis to show the properties of the conservative estimate, denoted as $\widehat{Q}$, in a finite state-action space $\mathcal{S} \times \mathcal{A}$. To learn $\widehat{Q}$, we employ \cref{eq:value_iteration} based on the replay buffer. This approach bootstraps only in-distribution actions, yielding a conservative estimate of the action value. In comparison to the $Q$ function, which bootstraps from the entire action space, $\widehat{Q}$ reduces extrapolation error from out-of-distribution data. $\widehat{Q}$ serves as a lower bound for $Q$ and converges to the global optimum as data coverage expands. The complete proofs of \cref{theorem:bellman} are provided in Appendix~\ref{appendix:proof_theorem}.

\begin{theorem}
Consider the tabular case with finite state-action space $\mathcal{S}\times \mathcal{A}$. Let $Q_t$ and $\widehat{Q}_t$ represent the Q-values learned following the Bellman optimality equation and \cref{eq:value_iteration} at time step $t$, respectively. We have $Q_t$ and $\widehat{Q}_t$ converge to fixed points $Q^*$ and $\widehat{Q}^*$, i.e., $\lim_{t\rightarrow \infty} Q_t = Q^*$ and $\lim_{t\rightarrow \infty} \widehat{Q}_t = \widehat{Q}^*$. Furthermore, for all $(s,a) \in \mathcal{S} \times \mathcal{A}$, $\widehat{Q}^*(s,a) - Q^*(s,a)\le 0$. This equality holds if all state-action pairs are visited.
\label{theorem:bellman}
\end{theorem}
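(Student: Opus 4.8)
The plan is to treat both updates as fixed-point iterations of contraction operators on the finite-dimensional space $\mathbb{R}^{\mathcal{S}\times\mathcal{A}}$ equipped with the sup-norm, and then compare their fixed points by a monotonicity argument. First I would define the standard Bellman optimality operator $\mathcal{B}$ by $(\mathcal{B}Q)(s,a) = \sum_{s'} p(s'|s,a)\big[r(s,a) + \gamma\max_{a'\in\mathcal{A}}Q(s',a')\big]$ and the constrained operator $\widehat{\mathcal{B}}$ used in \cref{eq:value_iteration} by $(\widehat{\mathcal{B}}Q)(s,a) = \sum_{s'}\widehat{p}(s'|s,a)\big[\widehat{r}_\psi(s,a) + \gamma\max_{a'\in\partial\mathcal{A}(s')}Q(s',a')\big]$, where the max is taken over the support set. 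Both operators are $\gamma$-contractions in $\|\cdot\|_\infty$: for $\mathcal{B}$ this is classical, and for $\widehat{\mathcal{B}}$ the same proof goes through because $|\max_{a'\in\partial\mathcal{A}(s')}Q_1(s',a') - \max_{a'\in\partial\mathcal{A}(s')}Q_2(s',a')|\le \max_{a'\in\partial\mathcal{A}(s')}|Q_1(s',a')-Q_2(s',a')|\le\|Q_1-Q_2\|_\infty$, and the convex weights $\widehat{p}(\cdot|s,a)$ sum to one. By the Banach fixed-point theorem, the iterates $Q_t$ and $\widehat{Q}_t$ converge to unique fixed points $Q^*$ and $\widehat{Q}^*$ respectively, which gives the first claim. (I would either assume, consistent with the tabular-with-full-coverage setting, that $\widehat{p}=p$ and $\widehat{r}_\psi=r$ in the limit, or carry the reward-estimation/empirical-dynamics discrepancy as an extra error term; I suspect the intended statement implicitly identifies $\widehat{p}$ with $p$ and $\widehat{r}_\psi$ with the true reward, since otherwise the clean inequality need not hold.)

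For the inequality $\widehat{Q}^*(s,a)\le Q^*(s,a)$, the key observation is that $\widehat{\mathcal{B}}$ is dominated by $\mathcal{B}$ pointwise: for any $Q$ with $Q\le Q^*$ entrywise, $\max_{a'\in\partial\mathcal{A}(s')}Q(s',a')\le\max_{a'\in\mathcal{A}}Q(s',a')\le\max_{a'\in\mathcal{A}}Q^*(s',a')$ because restricting the max to a subset can only decrease it, hence $(\widehat{\mathcal{B}}Q)(s,a)\le(\mathcal{B}Q^*)(s,a)=Q^*(s,a)$. I would then argue by induction: starting both iterations from $Q_0=\widehat{Q}_0\equiv 0\le Q^*$, if $\widehat{Q}_t\le Q^*$ then $\widehat{Q}_{t+1}=\widehat{\mathcal{B}}\widehat{Q}_t\le Q^*$, so $\widehat{Q}_t\le Q^*$ for all $t$, and passing to the limit gives $\widehat{Q}^*\le Q^*$. (Equivalently, one can use monotonicity of $\widehat{\mathcal{B}}$ together with $\widehat{\mathcal{B}}Q^*\le\mathcal{B}Q^*=Q^*$ to show $Q^*$ is a super-solution and invoke that the fixed point is the least such, or just iterate $\widehat{\mathcal{B}}$ downward from $Q^*$.) For the equality case: if every state-action pair is visited then $\partial\mathcal{A}(s')=\mathcal{A}$ for all $s'$, so the constrained max equals the full max, $\widehat{\mathcal{B}}=\mathcal{B}$ (again modulo identifying $\widehat{p},\widehat{r}_\psi$ with their true counterparts), and uniqueness of the fixed point forces $\widehat{Q}^*=Q^*$.

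The main obstacle I anticipate is not the contraction or monotonicity machinery, which is routine, but making precise the role of the empirical quantities $\widehat{p}(s'|s,a)=N(s,a,s')/\sum_{s'}N(s,a,s')$ and the learned reward $\widehat{r}_\psi$: as literally written, \cref{eq:value_iteration} uses these in place of the true dynamics and reward, so $\widehat{Q}^*$ is the fixed point of an \emph{empirical} MDP restricted to in-sample actions, and comparing it to the true $Q^*$ requires either an asymptotic assumption (counts $N\to\infty$ so $\widehat{p}\to p$ by the law of large numbers, and $\widehat{r}_\psi\to r$) or an explicit additive error bound of the form $\|\widehat{Q}^*-Q^*\|\le$ (sampling error) + (reward error), which would muddy the clean "$\le 0$" statement. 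I would resolve this by stating up front the tabular idealization that the replay buffer's empirical model coincides with the true model on visited pairs — which is exactly the regime in which the "equality holds if all state-action pairs are visited" clause is meaningful — and then the proof reduces to the purely order-theoretic comparison between a full-support max and a restricted-support max sketched above.
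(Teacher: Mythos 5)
Your proposal is correct and shares the paper's skeleton (contraction $\Rightarrow$ fixed points $\Rightarrow$ comparison of fixed points $\Rightarrow$ coincidence under full coverage), but two of the three steps are carried out by genuinely different arguments, so a comparison is worthwhile. For convergence, you apply Banach's fixed-point theorem to repeated application of the constrained operator, which is exactly right for \cref{eq:value_iteration} as written (a full backup under the empirical dynamics $\widehat{p}$); the paper additionally treats the \emph{sampled, asynchronous} update actually performed on the graph as a stochastic process and invokes a stochastic-approximation theorem (Robbins--Monro/Jaakkola et al.) with Robbins--Monro step sizes to get almost-sure convergence of $\widehat{Q}_t$ --- your argument does not cover that regime, though nothing in the theorem statement forces it. For the inequality $\widehat{Q}^*\le Q^*$, you use monotonicity plus induction from $Q_0=\widehat{Q}_0\le Q^*$ (or equivalently that $Q^*$ is a super-solution of the constrained operator), which yields the stronger per-iterate statement $\widehat{Q}_t\le Q^*$ for all $t$; the paper instead derives the self-bounding inequality $\max_{s,a}(\widehat{Q}^*-Q^*)\le\gamma\max_{s,a}(\widehat{Q}^*-Q^*)$ directly from the two fixed-point equations and iterates it to zero. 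Both are valid; yours is arguably more standard and more informative, the paper's avoids any appeal to monotonicity. Finally, your explicit flag that the comparison only makes sense after identifying $\widehat{p}$ with $P$ and $\widehat{r}_\psi$ with $r$ is a genuine point the paper glosses over: its Lemma on the lower bound silently writes both operators with the true $P$ and the same $r$, so you have correctly isolated the implicit idealization rather than introduced a new gap.
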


\begin{figure*}[t]
\centering
\begin{center}
  \includegraphics[width=0.9\linewidth]{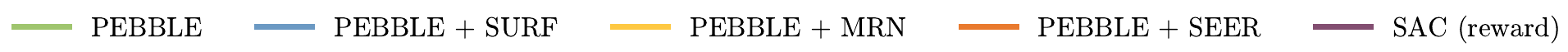}
  \vspace{-1.2em}
\end{center}
\begin{tabular}{cccc}
\hspace*{-1.1em}
\rotatebox{90}{\qquad \qquad Sokoban}
& \hspace*{-1.1em} \subfloat[Push-7x7-1 (feedback=300)]{\includegraphics[width=0.32\linewidth]{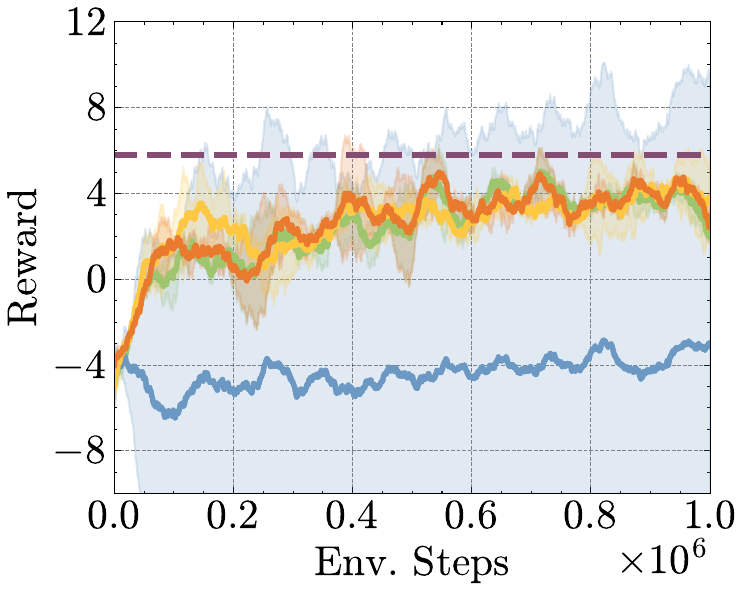}}
& \hspace*{-0.6em} \subfloat[Push-6x6-2 (feedback=1000)]{\includegraphics[width=0.32\linewidth]{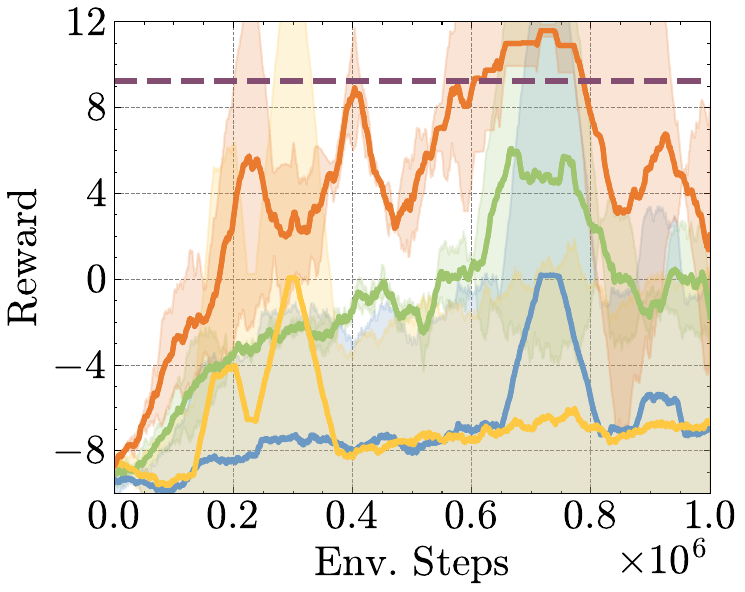}}
& \hspace*{-0.6em} \subfloat[Push-7x7-2 (feedback=1000)]{\includegraphics[width=0.32\linewidth]{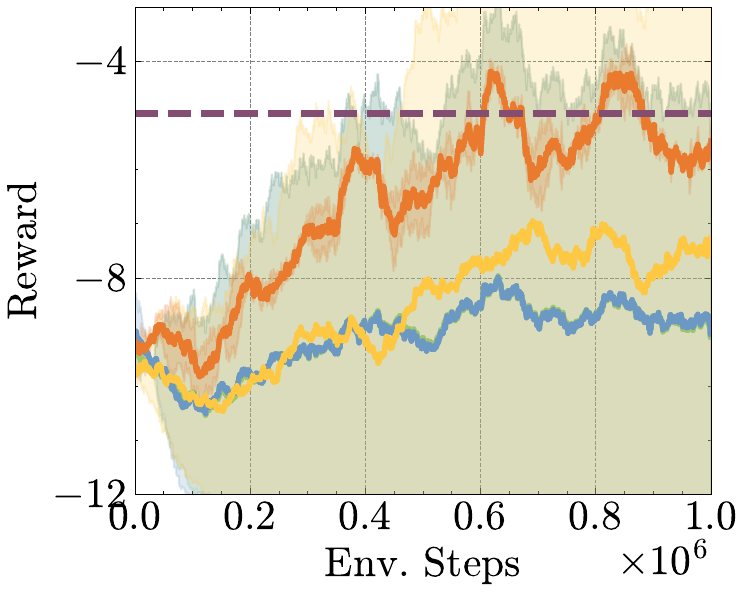}}
\vspace*{-0.5em} \\
\hspace*{-1.1em}
\rotatebox{90}{\qquad \qquad Craftenv}
& \hspace*{-1.1em} \subfloat[Strip-shaped (feedback=1000)]{\includegraphics[width=0.32\linewidth]{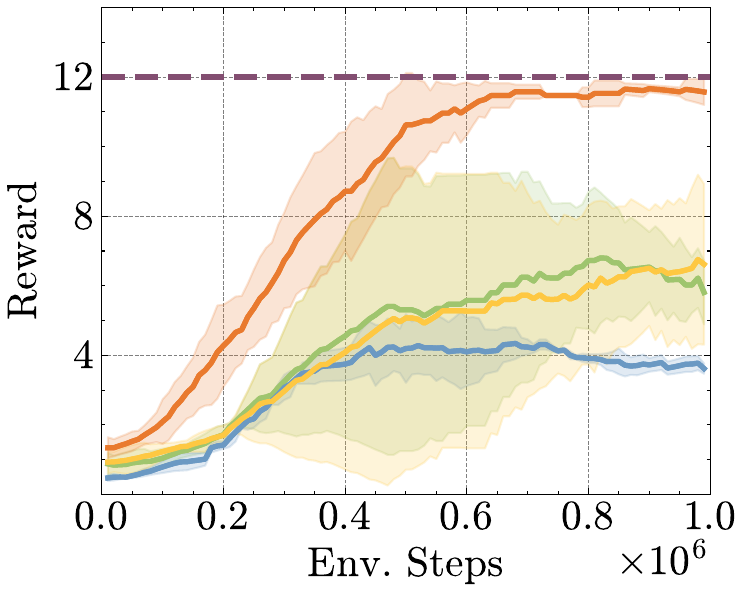}}
& \hspace*{-0.6em} \subfloat[Block-shaped (feedback=1000)]{\includegraphics[width=0.32\linewidth]{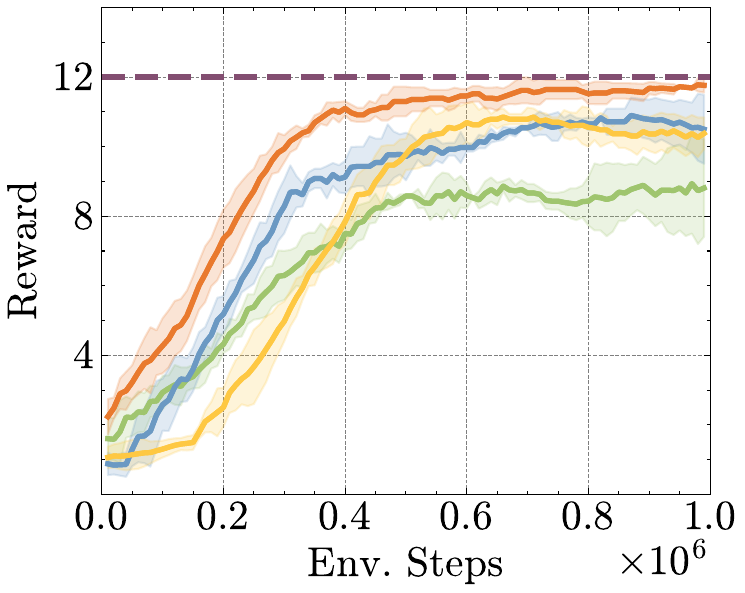}}
& \hspace*{-0.6em} \subfloat[Two-Story   (feedback=1000) ]{\includegraphics[width=0.32\linewidth]{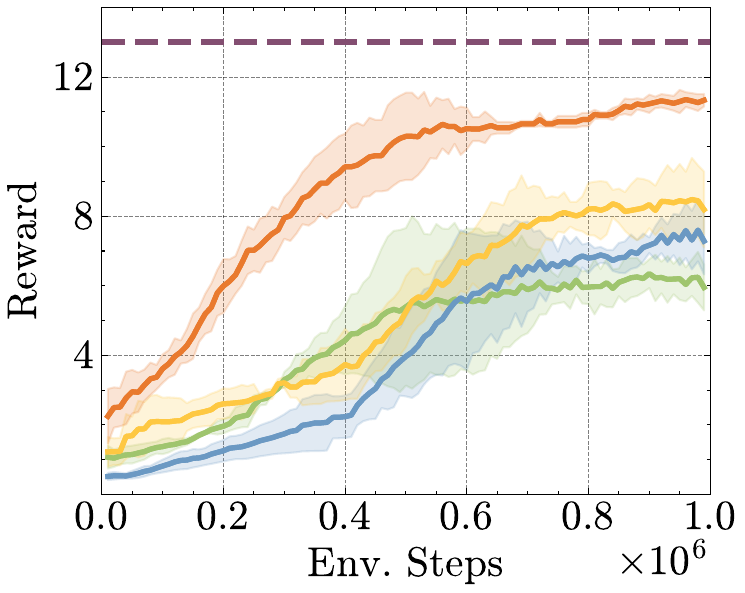}}
\end{tabular}
\caption{Training curves for all methods in discrete settings. The solid line indicates the mean values, while the shaded area denotes the standard deviations over five runs. The orange line is our method.}
\label{fig:main_result_discrete}
\vspace{-1.3em}
\end{figure*}

\section{Experiment}
In this section, we conduct evaluations across a variety of environments, including puzzle video games from Sokoban~\citep{SchraderSokoban2018}, robotic construction tasks from CraftEnv~\citep{ZhaoCraftEnv2023}, locomotion tasks from DeepMind Control Suite (DMControl)~\citep{tassa2018deepmind, tunyasuvunakool2020dm_control}, robotic manipulation challenges from Meta-world~\citep{yu2020meta}, and D4RL~\citep{fu2020d4rl}. Our aim is to investigate the following key questions:
\textbf{(1)} Does \ourmethod demonstrate better feedback efficiency in learning policies across different settings (discrete, continuous) in online environments?
\textbf{(2)} Is \ourmethod capable of training more effective policies in offline settings?
\textbf{(3)} Can \ourmethod achieve a more \textbf{accurate} Q function estimation?
\textbf{(4)} Are the individual components within \ourmethod \textbf{effective}?
The answers to questions (1) and (2) are provided in Section~\ref{sec:result_main}, supported by detailed empirical results. Questions (3) and (4) are explored in Section~\ref{sec:ablation} through ablation studies. Additional information about the tasks utilized in our experiments can be found in Appendix~\ref{appendix:task_specifications}.

\subsection{Experimental Settings} \label{subsec:setup}
\textbf{Baselines.} We compare our algorithm with several state-of-the-art methods, spanning both online, offline PbRL algorithms:
\textbf{Online Methods:}
(1) \textit{PEBBLE}~\citep{lee2021pebble}: integrates unsupervised pre-training with reward relabeling techniques during policy learning.
(2) \textit{SURF}~\citep{park2022surf}: employs temporal data augmentation and pseudo labels within a semi-supervised learning framework.
(3) \textit{MRN}~\citep{NEURIPS2022_8be9c134}: utilizes bi-level optimization methods in reward learning, which is the current state-of-the-art algorithm in online PbRL.
\textbf{Offline Methods:}
(4) \textit{PT}~\citep{kim2023preference}: utilizes transformer architecture to derive a non-Markovian reward and preference weighting function for offline PbRL.
(5) \textit{IPL}~\citep{abs-2305-15363}: directly optimizes the implicit rewards deduced from the learned Q-function, ensuring alignment with expert preferences.
\textbf{Reward-based Methods.} Explicit rewards are absent in PbRL settings. We utilize benchmarks that employ ground-truth rewards from the environment, specifically \textit{SAC}~\citep{haarnoja2018soft} for online scenarios, and \textit{IQL}~\citep{kostrikov2022offline} and \textit{TD3+BC}~\citep{NEURIPS2021_a8166da0} for offline scenarios.

\begin{figure*}[!ht]
\centering
\begin{center}
  \includegraphics[width=0.95\linewidth]{figures/results/main_header.pdf}
  \vspace{-1.2em}
\end{center}
\begin{tabular}{cccc}
\hspace*{-1.2em}
\rotatebox{90}{\qquad \qquad DMControl}
& \hspace*{-1.2em} \subfloat[Cheetah Run (feedback=100)]{\includegraphics[width=0.335\linewidth]{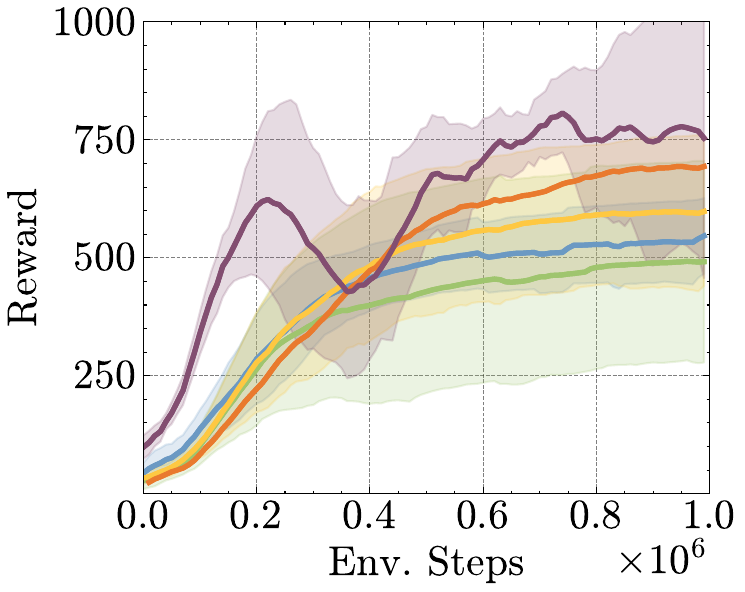}}
& \hspace*{-1.7em} \subfloat[Walker Walk (feedback=200)]{\includegraphics[width=0.335\linewidth]{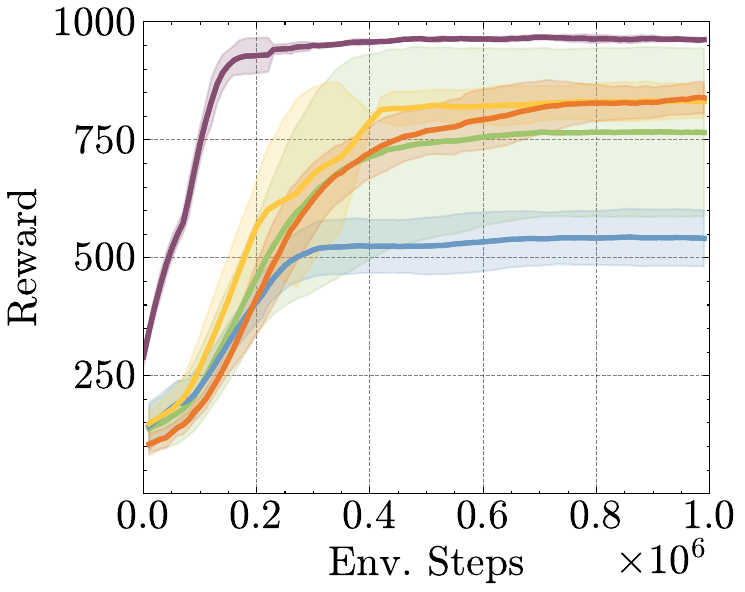}}
& \hspace*{-1.7em} \subfloat[Quadruped Walk (feedback=1000)]{\includegraphics[width=0.335\linewidth]{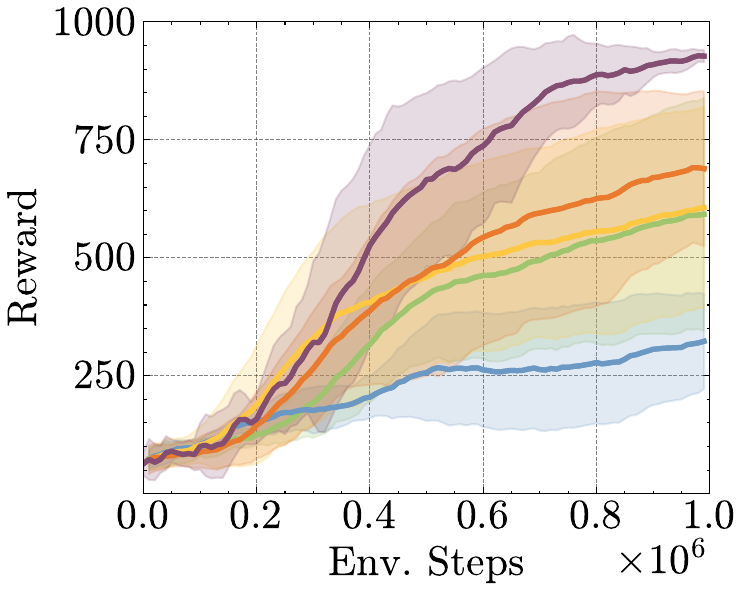}}
\vspace*{-0.5em} \\
\hspace*{-1.2em}
\rotatebox{90}{\qquad \qquad Meta-world}
& \hspace*{-1.2em} \subfloat[Button Press (feedback=100)]{\includegraphics[width=0.335\linewidth]{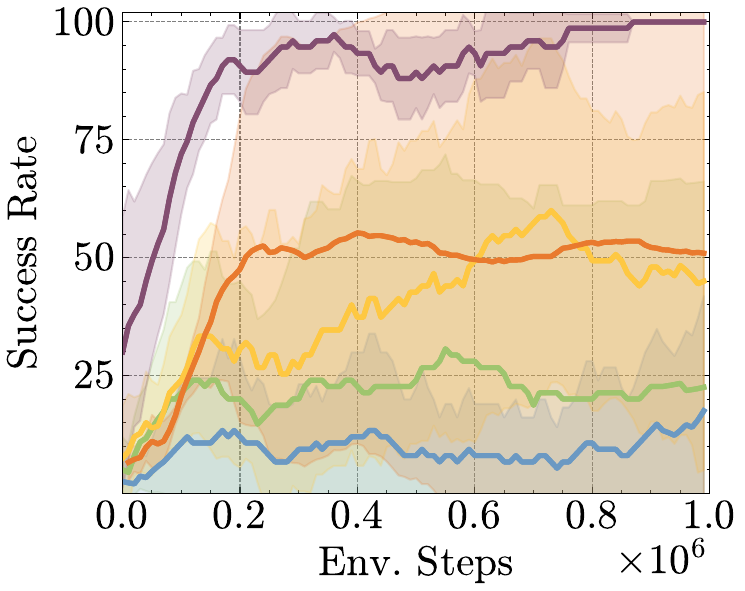}}
& \hspace*{-1.7em} \subfloat[Window Open (feedback=100)]{\includegraphics[width=0.335\linewidth]{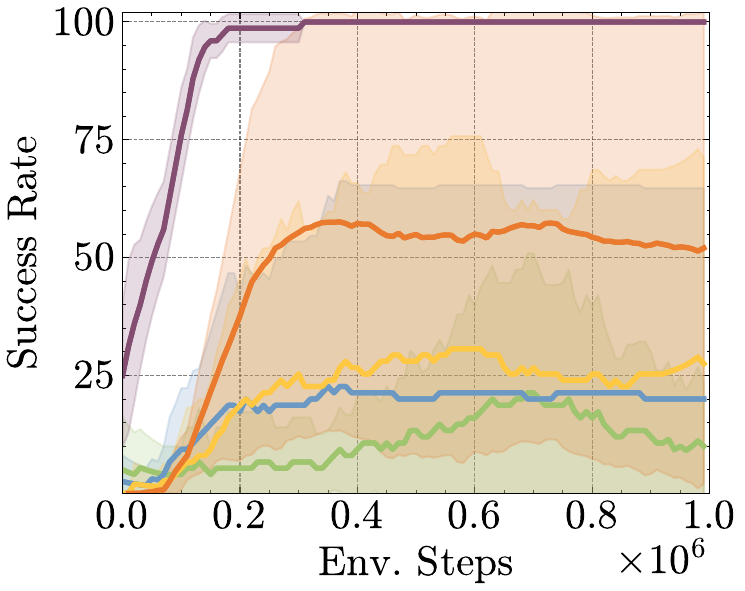}}
& \hspace*{-1.7em} \subfloat[Sweep Into (feedback=4000)]{\includegraphics[width=0.335\linewidth]{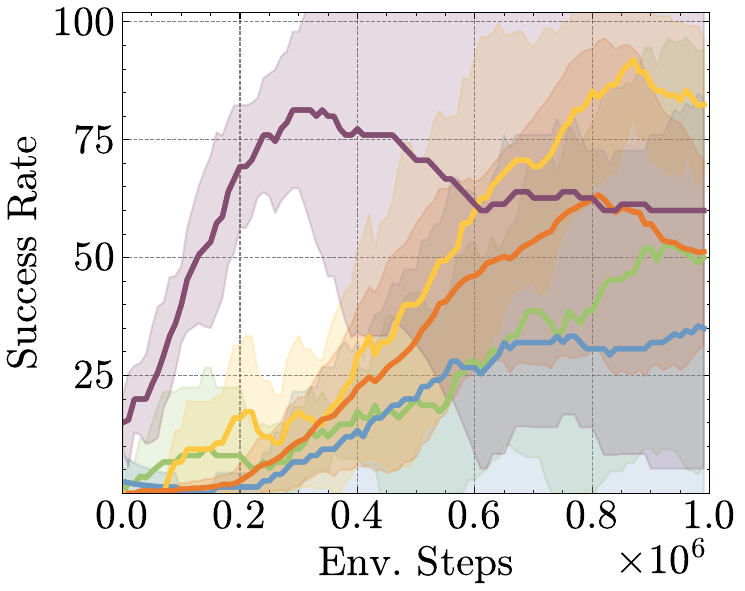}}
\end{tabular}
\caption{Evaluating curves of all methods on locomotion tasks and robotic manipulation tasks. The solid line presents the mean values, and the shaded area denotes the standard deviations over five runs. The orange line is our method.}
\label{fig:main_result_continuous}
% \vspace{-1.3em}
\end{figure*}

\noindent \textbf{Implementation details.}
In our experiments, we follow the basic settings employed in~\citep{lee2021pebble, park2022surf, NEURIPS2022_8be9c134}, which include unsupervised exploration techniques and an uncertainty-based trajectory sampling scheme (more details are available in Appendix~\ref{appendix:basic_pbrl}). Regarding the reward learning setting, all methods utilize an ensemble of three reward models, with their outputs confined to the range of $[-1, 1]$ through a hyperbolic tangent function. In line with prior research, we adopt a consistent approach for performance evaluation using a scripted teacher. This teacher supplies preference labels between pairs of trajectory segments based on the environment's inherent reward function. These preferences faithfully represent the environment's actual rewards, thus facilitating a quantitative comparison of the algorithms by assessing their true returns. It is important to emphasize that, within the PbRL framework, the agent does not have direct access to these rewards. As for offline settings, we use real-human preference data from \citet{kim2023preference}. And the number of preference labels required in each task is presented in Appendix~\ref{appendix:num_feedback}.

To ensure equitable comparisons, all methods are trained using the same network architecture and shared hyperparameters, except for method-specific elements. For baseline implementations, we use the publicly available code for PEBBLE~\footnote{\url{https://github.com/pokaxpoka/B_Pref}}, SURF~\footnote{\url{https://github.com/alinlab/SURF}}, MRN~\footnote{\url{https://github.com/RyanLiu112/MRN}}, and IPL~\footnote{\url{https://github.com/jhejna/inverse-preference-learning}}. More comprehensive details about the implementation of our approach and the baselines are provided in Appendix~\ref{appendix:imp_details}.

\subsection{Results}\label{sec:result_main}
\noindent \textbf{Discrete Settings.}
A detailed introduction and visualization of the six puzzle-solving tasks from Sokoban and the three flexible environments from CraftEnv are presented in Appendix~\ref{appendix:task_specifications}. We select these tasks for our experiments, covering a range of complexities. Figure~\ref{fig:main_result_discrete} depicts the learning curves of the average episode return for \ourmethod and baselines on discrete tasks. In each task, SAC utilizes the ground-truth reward, showcasing the best results as the performance upper bound. As observed in Figure~\ref{fig:main_result_discrete}, \ourmethod quickly achieves remarkable performance early in the training process across various tasks. Interestingly, in several tasks, \ourmethod nearly matches SAC's benchmark performance with only a limited number of human preference labels, indicating exceptional feedback efficiency. With a limited number of preference labels available, some baselines are significantly influenced by randomness in certain tasks, and their training curves show a downward trend in more challenging tasks. Notably, \ourmethod achieves performance comparable to PEBBLE, but with substantially fewer samples. For example, in the Strip-shaped building task, \ourmethod surpasses PEBBLE's average performance using only 30\% of the total samples. These results suggest that \ourmethod markedly reduces the feedback required to effectively tackle complex tasks, making it a highly efficient approach in PbRL.

\begin{table*}[t]
\centering
\myscriptsize % 更改字体大小
\caption{
Averaged normalized scores of all baselines on AntMaze, Gym-Mujoco locomotion tasks, and success rate on Robosuite manipulation tasks. All agents training use the same real human preferences dataset from~\citet{kim2023preference}. We train the \ourmethod and IPL, and report the average and standard deviation averaged over 15 runs. The term `reward' refers to the use of ground-truth rewards.}
\vspace{-0.5em}
\label{table:offline}
\renewcommand{\arraystretch}{1.2}%
\begin{center}
\aboverulesep = 0pt
\belowrulesep = 0pt
\resizebox{\textwidth}{!}{%
\begin{tabular}{l |cc|ccccc}
    \toprule
Dataset & IQL (reward) & TD3+BC (reward) & MR & LSTM & PT  & IPL  & \ourmethod(ours) \\
    \midrule
antmaze-medium-play-v2    & 73.88 \stdv{4.49}  & 0.25 \stdv{0.43} & 31.13 \stdv{16.96} & 62.88 \stdv{5.99}  & 70.13 \stdv{3.76} & 30.19 \stdv{4.97}  & 69.0  \stdv{14.97} \\
antmaze-medium-diverse-v2 & 68.13 \stdv{10.15} & 0.25 \stdv{0.43} & 19.38 \stdv{9.24}  & 20.13 \stdv{17.12} & 65.25 \stdv{3.59} & 24.21 \stdv{5.12}  & 67.0  \stdv{17.44} \\
antmaze-large-play-v2     & 48.75 \stdv{4.35}  & 0.0 \stdv{0.0}   & 24.25 \stdv{14.03} & 14.13 \stdv{3.60}  & 42.38 \stdv{9.98} & 12.46 \stdv{7.2}   & 50.67 \stdv{10.2}  \\
antmaze-large-diverse-v2  & 44.38 \stdv{4.47}  & 0.0 \stdv{0.0}   & 5.88  \stdv{6.94}  & 0.00  \stdv{0.00}  & 19.63 \stdv{3.70} & 0.0   \stdv{0.0}   & 48.0  \stdv{16.0}  \\
    \midrule
antmaze-v2 total & 58.79 & 0.13 & 20.16 & 24.29 & 49.35  & 16.72  & 58.67 \\
    \midrule
hopper-medium-replay-v2   & 83.06 \stdv{15.80} & 64.42 \stdv{21.52}& 11.56 \stdv{30.27}& 57.88 \stdv{40.63} & 84.54 \stdv{4.07}  & 73.57 \stdv{6.7}  & 85.29 \stdv{5.11} \\
hopper-medium-expert-v2   & 73.55 \stdv{41.47} & 101.17\stdv{9.07} & 57.75 \stdv{23.70}& 38.63 \stdv{35.58} & 68.96 \stdv{33.86} & 74.52 \stdv{0.1}  & 96.75 \stdv{4.31} \\
walker2d-medium-replay-v2 & 73.11 \stdv{8.07}  & 85.62 \stdv{4.01} & 72.07 \stdv{1.96} & 77.00\stdv{3.03}   & 71.27 \stdv{10.30} & 59.92 \stdv{5.1}  & 73.91 \stdv{1.33} \\
walker2d-medium-expert-v2 & 107.75\stdv{2.02}  & 110.03\stdv{0.36} & 108.32\stdv{3.87} & 110.39 \stdv{0.93} & 110.13 \stdv{0.21} & 108.51\stdv{0.6}  & 110.12\stdv{0.24} \\
    \midrule
locomotion-v2 total & 84.37 & 90.31 & 62.43 & \textcolor{black}{70.98} & 83.72  & 79.13  & 91.52 \\
    \midrule
lift-ph  &  96.75 \stdv{1.83} & -              & 84.75 \stdv{6.23} & 91.50 \stdv{5.42} & 91.75 \stdv{5.90}  & 97.6 \stdv{2.9}  & 98.0 \stdv{4.0}  \\
lift-mh  & 86.75 \stdv{2.82}  & -              & 91.00 \stdv{4.00} & 90.75 \stdv{5.75} & 86.75 \stdv{5.95}  & 87.2 \stdv{5.3}  & 93.0 \stdv{8.94} \\
can-ph   &  74.50 \stdv{6.82} & -              & 68.00 \stdv{9.13} & 62.00 \stdv{10.90}& 69.67 \stdv{5.89}  & 74.8 \stdv{2.4}  & 70.0 \stdv{13.56}\\
can-mh   &  56.25 \stdv{8.78} & -              & 47.50 \stdv{3.51} & 30.50 \stdv{8.73} & 50.50 \stdv{6.48}  & 57.6 \stdv{5.0}  & 47.0 \stdv{9.8}  \\
    \midrule
robosuite total & 78.56 & - & 72.81 & 68.69 & 74.66  & 79.3  & 77.0 \\
    \bottomrule
    \end{tabular}
    } % end of resize 
    \end{center}
\end{table*}

\noindent \textbf{Continuous Settings.}
We also evaluated \ourmethod's performance in continuous settings, focusing on three locomotion tasks from DMControl and three robotic simulated manipulation tasks from Meta-world. As depicted in Figure~\ref{fig:main_result_continuous}, \ourmethod surpasses the baselines in most tasks. Notably, in the Cheetah Run task, \ourmethod achieves 96\% of the best performance using only 100 preference labels, showcasing remarkable feedback efficiency. A comparison between the orange (\ourmethod) and green (PEBBLE) curves in the figures clearly indicates that \ourmethod improves upon PEBBLE's performance. These outcomes, as illustrated in Figure~\ref{fig:main_result_continuous}, demonstrate that \ourmethod enhances the feedback efficiency of PbRL methods across diverse and complex tasks. It is noteworthy that \ourmethod effectively utilizes the conservative estimate $\widehat{Q}$ function to regularize policy, resulting in improvements, especially with a limited number of preference labels.

\noindent \textbf{Offline Settings.} We benchmark \ourmethod against several offline PbRL algorithms on the D4RL~\citep{fu2020d4rl} and Robosuite~\citep{zhu2020robosuite} robotics datasets, utilizing real-human preference data from \citet{kim2023preference}. As before, we aim to avoid using out-of-samples actions. Therefore, we extract the policy using advantage weighted regression~\citep{1273496.1273590, NEURIPS2018_4aec1b34, nair2020awac}. In addition to comparing \ourmethod with two state-of-the-art offline PbRL algorithms, IPL and PT, we also benchmark it against two significant offline RL algorithms. As TD3 is our basic RL algorithm, we include results from TD3 enhanced with Behavior Cloning (TD3+BC)~\citep{NEURIPS2021_a8166da0}. The comparison also features different reward model architectures: MR (MLP-based), LSTM (LSTM-based), and PT (Transformer-based), corresponding to models proposed in ~\citet{NIPS2017_d5e2c0ad}, ~\citet{NEURIPS2022_b157cfde}, and ~\citet{kim2023preference} respectively. Our findings are presented in Table~\ref{table:offline}. \ourmethod consistently surpasses all baselines in nearly every task. Notably, it exhibits a clear advantage in challenging tasks like antmaze-large. Only our method almost matches the performance of IQL with the task reward and even surpasses IQL in some cases. These results demonstrate the effectiveness of our method across a variety of tasks in offline settings.

\subsection{Ablation Studies} \label{sec:ablation}
\noindent \textbf{Contribution of each technique.}
To assess the individual contributions of each technique in \ourmethod, we apply human label smoothing and policy regularization. Figure~(\ref{fig:pi_reg}) shows a comparative analysis of \ourmethod's performance with and without policy regularization on tasks such as Cheetah Run and Quadruped Walk. This visual representation clearly indicates that incorporating policy regularization enables agents to enhance their performance. Figure~(\ref{fig:label_smooth}) displays the impact of varying $\lambda$ values on \ourmethod's performance, underscoring the effectiveness of label smoothing in improving performance. However, we also note that while label smoothing is beneficial, an overly large $\lambda$ value can be detrimental. It is observed that in more complex tasks, a smaller $\lambda$ is preferable, whereas in less challenging tasks, a larger $\lambda$ value tends to yield better results.

\vspace{0.5em}
\noindent \textbf{Accuracy of value estimation.} 
We assess the accuracy of value estimation in \ourmethod by examining the value estimate trajectory during the learning process on Cheetah Run. Figure~(\ref{fig:value_estimation}) illustrates this by charting the average value estimate across 10000 states and contrasting it with an estimate of the true value. The true value is calculated based on the average discounted return obtained by following the current policy. Notably, a clear overestimation bias is evident in the learning procedure. When constrained by the policy regularizer, \ourmethod shows a significant reduction in overestimation bias, leading to a more accurate Q function. This improvement in Q estimation benefits the learning cycle of PbRL, making it a more effective approach.

\begin{figure*}[t]
% \vspace{-1em}
\centering
\begin{tabular}{ccc}
% \hspace*{-0.8em}
 \hspace*{-1.1em} \subfloat[Policy Regularization]{\includegraphics[width=0.33\linewidth]{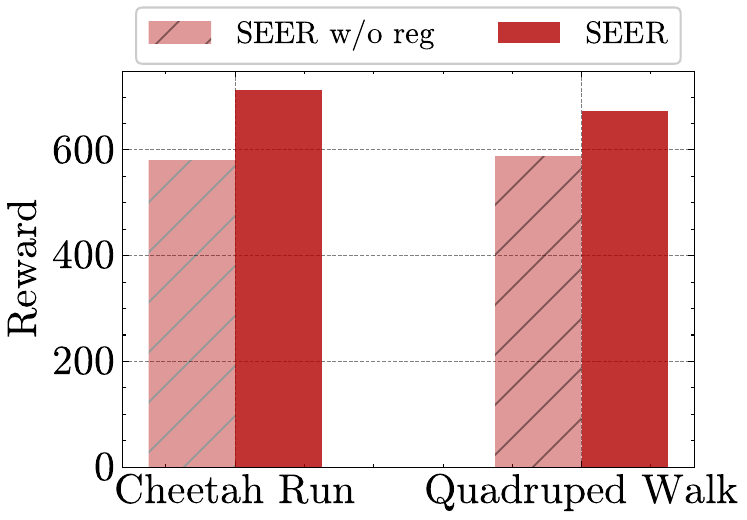}\label{fig:pi_reg}}
& \hspace*{-1.1em} \subfloat[Label Smoothing]{\includegraphics[width=0.33\linewidth]{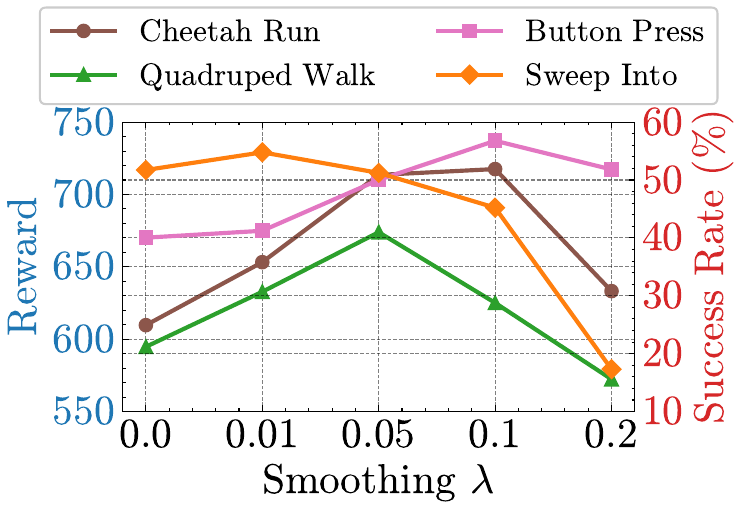}\label{fig:label_smooth}}
& \hspace*{-1.1em}  \subfloat[Value Estimation Accuracy]{\includegraphics[width=0.33\linewidth]{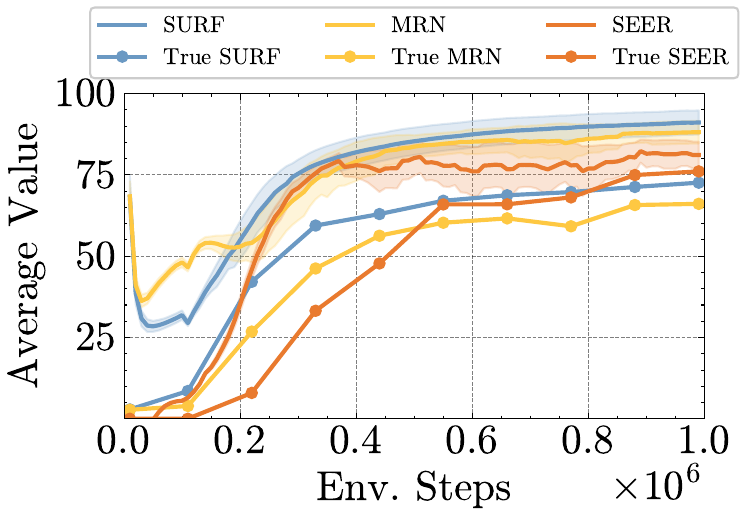}\label{fig:value_estimation}}
\end{tabular}
\caption{Ablation studies evaluating (a) the effectiveness of policy regularization, (b) the impact of the parameter $\lambda$  in label smoothing, and (c) the overestimation bias in value estimates.}
\label{fig:abla_weight}
\end{figure*}

\section{Related Work}
\noindent \textbf{Preference-based Reinforcement Learning.}
% \vspace{-0.5em}
Preference-based RL (PbRL) is a novel way to learn agents from human feedback without reward engineering. Instead, a human provides preferences between the agent’s behaviors, and the agent uses this feedback to perform the task. \citet{NIPS2017_d5e2c0ad} formulates a basic framework, and \citet{NEURIPS2018_8cbe9ce2} utilizes imitation learning as the warm-start strategy to speed up PbRL. To further improve feedback efficiency, \textbf{PEBBLE}~\citep{lee2021pebble} replaces PPO~\citep{SchulmanWDRK17} with SAC~\citep{haarnoja2018soft} to achieve data efficiency and combines unsupervised pre-training with reward relabeling technique during learning. Building upon this foundation, ~\citet{park2022surf} introduces \textbf{SURF}, a semi-supervised reward learning framework that improves reward learning via pseudo-labels and temporal cropping augmentation. \textbf{MRN}~\citep{NEURIPS2022_8be9c134} incorporates bi-level optimization for improving the quality of the Q function. Besides, some research has various considerations, such as skill extraction ~\citep{pmlr-v164-wang22g}, intrinsic reward ~\citep{liang2022reward}, meta-learning ~\citep{iii2022fewshot}, and these methods have improved the efficiency to a certain extent. In addition to focusing on PbRL in online settings, a significant portion of research also concentrates on PbRL in offline settings. \textbf{PT}~\citep{kim2023preference} leverages a transformer architecture to learn a non-Markovian reward and preference weighting function. \textbf{IPL}~\citep{abs-2305-15363} directly optimizes the implicit rewards deduced from the learned Q-function, ensuring alignment with expert preferences. ~\citet{pmlr-v202-kang23b} models offline trajectories and preferences in a one-step process without reward learning. 
When dealing with language models, PbRL naturally facilitates the emergence of reinforcement learning from human feedback (RLHF)~\citep{NEURIPS2022_b1efde53}. Before that, \citet{NEURIPS2020_1f89885d,abs-2109-10862} have fine-tuned a summarizing policy following the PbRL paradigm. Our approach is orthogonal to previous approaches, that we use conservative estimate $\widehat{Q}$ to regularize the neural Q-function to mitigate overestimation and smooth the human label to prevent overfitting of the reward model.

\section{Conclusion}
In this work, we present \ourmethod, a novel PbRL algorithm that notably enhances feedback efficiency. By integrating policy regularization and label smoothing, \ourmethod not only surpasses previous methods but also significantly improves feedback efficiency across various complex tasks in both online and offline settings. A key strength of our method is its remarkable performance with a limited number of preference labels. Our empirical results and analyses indicate that the enhanced feedback efficiency in \ourmethod primarily arises from two factors: \textbf{(1)} Our approach mitigates overestimation bias, contributing to a more precise Q-function estimation. \textbf{(2)} Label smoothing effectively reduces the reward model's tendency to overfit. We hope our method can provide inspiration for future work and encourage preference-based reinforcement learning to be better extended to practical applications. \ourmethod's success in addressing these critical aspects of PbRL demonstrates its potential to influence future research and expand the practical applicability of preference-based reinforcement learning in various domains.

\textbf{Limitations.} 
First, the preference learning in our algorithm is based on trajectories, which means it cannot accurately discriminate between good and bad actions within one trajectory. Additionally, we do not discuss high-dimensional and multi-modal inputs, such as images and natural language. While these are not the focus of this work, we consider this as an interesting future direction.

%%%%%%%%%%%%%%%%%%%%%%%%%%%%%%%%%%%%%%%%%%%%%%%%%%%%%%%%%%%%%%%%%%%%%%%%%%%%%%%
%%%%%%%%%%%%%%%%%%%%%%%%%%%%%%%%%%%%%%%%%%%%%%%%%%%%%%%%%%%%%%%%%%%%%%%%%%%%%%%
% References
% \newpage
\bibliography{neurips_2024}
\bibliographystyle{plainnat}

% Appendix 
% \appendix
\newpage
\appendix
\onecolumn
\setcounter{theorem}{0}

\section{The Full Procedure of \ourmethod}
The detailed procedures of our proposed method are outlined in Algorithm~\ref{pseudo_code}. 
Our method is based on the established framework of preference-based RL, PEBBLE~\cite{lee2021pebble}. 

In online settings for continuous action spaces, we utilize the constrained $\max$ operator for conservative estimation $\widehat{Q}$. Conversely, in discrete action spaces, a neural network $Q_\xi$ is employed for estimation. In offline settings, our approach initially focuses on training the reward model $\widehat{r}_\psi$, followed by policy learning.

\begin{algorithm}[]
\caption{\ourmethod (Online)}
\label{pseudo_code}
\begin{algorithmic}[1]
    \REQUIRE preference query frequency $K$, number of human's preference labels per session $M$
    \STATE Initialize parameters of $Q_\theta$, $\pi_\phi$, $\widehat{r}_\psi$, and preference dataset  $\mathcal{D} \leftarrow \emptyset$
    \STATE Initialize replay buffer $\gB$ and $\pi_\theta$ with unsupervised exploration
    \FOR{each iteration}
        \STATE Take action $\at \sim \pi_\theta$ and collect $s_{t+1}$ 
        % update reward model
        \IF{iteration \% $K == 0$}
            \STATE \small{\textcolor[HTML]{4b7223}{\texttt{// Query preference}}}
            \STATE Sample pair of trajectories $(\sigma^0,\sigma^1)$ and query human for $y$ 
            \STATE Store preference data into dataset $\mathcal{D} \leftarrow \mathcal{D} \cup \{ (\sigma^0,\sigma^1,y) \}$
            \STATE \small{\textcolor[HTML]{4b7223}{\texttt{// Reward learning}}}
            \STATE Sample batch $\{(\sigma^0,\sigma^1,y)_i\}^n_{i=1}$ from $\mathcal{D}$
            \STATE Optimize \cref{eq:reward_loss} to update $\widehat{r}_\psi$
            \STATE Relabel the replay buffer $\gB$ using $\widehat{r}_\psi$
        \ENDIF
        % Update the Graph
        \STATE \small{\textcolor[HTML]{4b7223}{\texttt{// Estimate conservative $\widehat{Q}$}}}
        \STATE Store transition $(\st, \at,\widehat{r}_\psi(\st, \at), s_{t+1})$ into replay buffer $\gB$
        \STATE Drive $\widehat{Q}$ via \cref{eq:value_iteration} (discrete setting)
        \STATE Update $Q_\xi$ via \cref{eq:loss_q_xi} (continuous setting)
        \STATE \small{\textcolor[HTML]{4b7223}{\texttt{// Policy regularization}}}
        \STATE Update $Q_\theta$ according to \cref{eq:discrete_q_loss}.(discrete setting)
        \STATE Update $Q_\theta$ and $\pi_\phi$ according to \cref{eq:q_loss} and \cref{eq:continuous_pi_loss}, respectively.(continuous setting)
    \ENDFOR
    \ENSURE policy $\pi_\phi$
\end{algorithmic} 
\end{algorithm}
\begin{algorithm}[]
\caption{\ourmethod (Offline)}
\label{pseudo_code_offline}
\begin{algorithmic}[1]
    \REQUIRE preference dataset $\gD$, dataset $\gB$
    \STATE Initialize parameters of $Q_\theta$, $\pi_\phi$, $\widehat{r}_\psi$
    \STATE \small{\textcolor[HTML]{4b7223}{\texttt{// Reward learning}}}
    \STATE Optimize \cref{eq:reward_loss} to update $\widehat{r}_\psi$ with preference data from $\mathcal{D}$
    \STATE Label the dataset $\gB$ via $\widehat{r}_\psi$
    \FOR{each iteration}
        \STATE Sample a batch $(s, a,\widehat{r}_\psi(s, a), s^\prime)$ from $\gB$
        \STATE \small{\textcolor[HTML]{4b7223}{\texttt{// Estimate conservative $\widehat{Q}$}}}
        \STATE Drive $\widehat{Q}$ via \cref{eq:value_iteration} (discrete setting)
        \STATE Update $Q_\xi$ via \cref{eq:loss_q_xi} (continuous setting)
        \STATE \small{\textcolor[HTML]{4b7223}{\texttt{// Policy regularization}}}
        \STATE Update $Q_\theta$ according to \cref{eq:discrete_q_loss}.(discrete setting)
        \STATE Update $Q_\theta$ and $\pi_\phi$ according to \cref{eq:q_loss} and \cref{eq:continuous_pi_loss}, respectively.(continuous setting)
    \ENDFOR
    \ENSURE policy $\pi_\phi$
\end{algorithmic} 
\end{algorithm}

\section{Experimental Details}\label{appendix:imp_details}
In this section, we provide the implementation details, including the basic settings for \PbRL, the architecture of the neural network, hyper-parameters, and other training details. For each run of experiments, we utilize one NVIDIA Tesla V100 GPU and 8 CPU cores for training.

\subsection{Number of Feedback}\label{appendix:num_feedback}
For the online settings, we use 100 preference pairs in Cheetah Run, Button Press, Window Open; 200 preference pairs in Walker Walk; 300 preference pairs in Push-5×5-1, Push-6x6-1, Push-7×7-1; 1000 preference pairs in Quadruped Walk, Push-5×5-2, Push-6x6-2, Push-7x7-2, Strip-shaped Building, Block-shaped Building, and Simple Two-Story Building tasks; 4000 preference pairs in Sweep Into. And for the offline settings, we use 100 preference pairs in antmaze-medium-play-v2, antmaze-medium-diverse-v2, hopper-medium-expert-v2, walker2d-medium-expert-v2, can-ph, lift-ph; 500 preference pairs in hopper-medium-replay-v2, walker2d-medium-replay-v2, can-mh, lift-mh; 1000 preference pairs in antmaze-large-play-v2, antmaze-large-diverse-v2.

\subsection{Basic Settings} \label{appendix:basic_pbrl}
In the following section, we provide more details of the unsupervised exploration and the uncertainty-based sampling scheme, both of which are mentioned in Section~\ref{subsec:setup}. These are pivotal techniques in enhancing the feedback efficiency of algorithms, as referenced in~\citet{lee2021pebble}. To ensure a fair comparison, all \PbRL algorithms in our experiments incorporate both unsupervised exploration and uncertainty-based sampling.

\noindent \textbf{Unsupervised Exploration.}
The technique of unsupervised exploration in {\PbRL} is proposed by~\citet{lee2021pebble}. Designing an intrinsic reward based on the entropy of the state efficiently encourages the agent to visit more diverse states and generate more various behaviors. More specifically, it uses a variant of particle-based entropy~\citep{10737616} as the estimation of entropy for the convenience of computation.

\noindent \textbf{Uncertainty-based Sampling.} There are some different sampling schemes, including but not limited to uniform sampling, disagreement sampling, and entropy sampling. The latter two sampling schemes are classified as uncertainty-based sampling, which has better performance compared to uniform sampling both intuitively and empirically. In our experiments, all method (online settings) use the disagreement sampling schemes.

\begin{table}[!ht]
\caption{Hyperparameters of SAC.}
\begin{center}
\resizebox{\textwidth}{!}{
\begin{tabular}{ll|ll}
\toprule
\textbf{Hyperparameter} & \textbf{Value} & \textbf{Hyperparameter} & \textbf{Value} \\
\midrule
Number of layers & 3 layers: 1 Conv2d, 2 Linear & Discount & 0.99 \\
Number of kernels of Conv2d & 16 & Batch size & 256 \\
Size of Kernel of Conv2d & 3 & Initial temperature & 0.2 \\
Stride of Conv2d & 1 & $(\beta_1,\beta_2)$ & (0.9,0.999) \\
Padding of Conv2d & 0 & Update freq & 4 \\
Hidden units of hidden layer & 128 & Critic target update freq & 8000 \\
Activation Function & ReLU & Critic $\tau$ & 1 \\
Actor optimizer & Adam & Exploration & 1 \\
Critic optimizer & Adam & Learning rate & 1e-4 \\
\bottomrule
\end{tabular}}
\end{center}
\label{table:hyperparameters_sac}
\end{table}

\subsection{Architecture and hyperparameters.}
In this section, we describe the architecture of neural networks in the SAC algorithm, which is used as the baseline method. Then we present the full list of hyperparameters of SAC, PEBBLE, and the proposed SEER. The actor of SAC has three layers; specifically, the first layer is the convolutional layer, composed of 16 kernels with a size of 3. Then we squeeze the output into one dimension as the input for the last two fully connected layers. The two Q networks of SAC have the same architecture as that of the actor: one convolutional layer and two fully connected layers. The detailed parameters of the neural network and hyperparameters during learning are shown in table ~\ref{table:hyperparameters_sac}. The hyperparameters of PEBBLE and SEER, which are different from those of SAC, are presented in table ~\ref{table:hyperparameters_pebble}.

\begin{table}[!htbp]
\caption{Hyperparameters of PEBBLE and SEER.}
\begin{center}
\resizebox{0.9\textwidth}{!}{
\begin{tabular}{ll|ll}
\toprule
\textbf{Hyperparameter} & \textbf{Value} & \textbf{Hyperparameter} & \textbf{Value} \\
\midrule
Length of segment      & 50      & Reward model ensemble size & 3 \\
Learning rate          & 0.0003  & Frequency of feedback      & 2000 \\
Reward batch size      & 128     & Number of train steps      & 1e6  \\
Reward update          & 200     & Replay buffer capacity     & 1e6  \\
Scaling parameter $\beta$ (SEER) & 6       & Graph update batch size (SEER) & 32 \\
Label smooth $\lambda$ (SEER)    & 0.05    & Regularizer weight $\eta$ (SEER)               & 6 \\
\bottomrule
\end{tabular}
}
\end{center}
\label{table:hyperparameters_pebble}
\end{table}

\subsection{Human labels}
We incorporate feedback from human subjects experienced in robotic tasks, as outlined in PT~\cite{kim2023preference}\footnote{\url{https://github.com/csmile-1006/PreferenceTransformer}}. Specifically, an informed human instructor evaluates each task by viewing video renderings of trajectory segments. They then determine which of the two segments more effectively facilitates the agent's goal achievement. Each segment spans 3 seconds, equivalent to 100 time steps. In instances where the human evaluator is unable to discern a preference between segments, they have the option to choose a neutral stance, attributing equal preference to both segments.

\section{Proof of Theorem}
\label{appendix:proof_theorem}

\subsection{Proof of Theorem~\ref{theorem:bellman}}
\label{appendix:proof}

In this section, we analyze the property of $\widehat Q$ in finite state-action space $\mathcal{S}\times\mathcal{A}$. 
% \begin{proof}[Proof Sketch.]
The proof of $\lim_{t\rightarrow \infty} Q_t = Q^*$ has been well-established in previous work \citep{robbins1951stochastic,jaakkola1993convergence,melo2001convergence}. Then the proof of $\lim_{t\rightarrow \infty} \widehat{Q}_t = \widehat{Q}^*$ is similar.
% can follow the proof of $Q_t$. 
We first prove the empirical Bellman operator~\cref{eq:empirical bellman operator} is a $\gamma$-contraction operator under the supremum norm. Then when updating in a sampling manner as \cref{eq:empirical bellman}, it can be considered as a random process. Borrowing an auxiliary result from stochastic approximation,
% \citep{robbins1951stochastic,jaakkola1993convergence}, 
we prove it satisfies the conditions that guarantee convergence. 
Finally, to prove $\widehat{Q}^*$ lower-bounds $Q^*$, we rewrite $\widehat{Q}^*(s,a) - Q^*(s,a)$ based on the standard and empirical Bellman operators. When the data covers the whole state-action space, we naturally have $\widehat{Q}^*=Q^*$.
% \end{proof}

For proof simplicity, we use $\beta$ denotes policies that interact with the environment and form the current replay memory.
We first show existing results for Bellman learning in \cref{eq:bellman}, and then prove \cref{theorem:bellman} in three steps.
The Bellman (optimality) operator $\mathcal{B}$ is defined as:
\begin{equation}
\label{eq:bellman operator}
(\mathcal{B}Q)(s,a) = \sum_{s^\prime\in \mathcal{S}}P(s^\prime|s,a)[r+\gamma \max_{a^\prime}Q(s^\prime,a^\prime)].
\end{equation}
Previous works have shown the operator $\mathcal{B}$ is a $\gamma$-contraction with respect to supremum norm:
\begin{equation*}
\Vert \mathcal{B}Q_1 - \mathcal{B}Q_2 \Vert_\infty \leq \gamma\Vert Q_1 - Q_2 \Vert_\infty,
\end{equation*}
the supremum norm $\Vert v \Vert_\infty = \max_{1\leq i \leq d} |v_i|$, $d$ is the dimension of vector $v$.
Following Banach's fixed-point theorem, $Q$ converges to optimal action value $Q^*$ if we consecutively apply operator $\mathcal{B}$ to $Q$,  $\lim_{n\rightarrow \infty} (\mathcal{B})^{n} Q = Q^*$.

Further, the update rule in \cref{eq:bellman}, i.e. $Q$-learning, is a sampling version that applies the $\gamma$-contraction operator $\mathcal{B}$ to $Q$. 
% The sampling version we use is:
\begin{equation}
Q(s,a) \leftarrow r(s,a)+\gamma\max_{a^\prime}Q(s^\prime,a^\prime). 
\label{eq:bellman}
\end{equation}
It can be considered as a random process and will converge to $Q^*$, $\lim_{t\rightarrow \infty}Q_t = Q^*$, with some mild conditions \citep{szepesvari2010algorithms,robbins1951stochastic,jaakkola1993convergence,melo2001convergence}.

Similarly, we define the empirical Bellman (optimality) operator $\mathcal{\hat{B}}$ as:
\begin{equation}
\label{eq:empirical bellman operator}
(\mathcal{\hat{B}}\widehat{Q})(s,a) = \sum_{s^\prime\in \mathcal{S}}P(s^\prime|s,a)[r+\gamma \max_{a^\prime:\beta(a^\prime|s^\prime)>0}\widehat{Q}(s^\prime,a^\prime)].
\end{equation}
And the sampling version we used on the graph is:
\begin{equation}
    \widehat{Q}(s,a) \leftarrow r+\gamma\max_{a^\prime:\beta(a^\prime|s^\prime)>0}\widehat{Q}(s^\prime,a^\prime),
\label{eq:empirical bellman}
\end{equation} 

We split \cref{theorem:bellman} into three lemmas.
We first show $\mathcal{\hat{B}}$ is a $\gamma$-contraction operator under supremum norm, thus converges to optimal action value $\widehat{Q}^*$, $\lim_{n\rightarrow \infty} (\mathcal{B})^{n} \widehat{Q} = \widehat{Q}^*$.
Then we show the sampling-based update rule in \cref{eq:empirical bellman} converges to $\widehat{Q}^*$, $\lim_{t\rightarrow \infty}\widehat{Q}_t = \widehat{Q}^*$. Finally, we show $\widehat{Q}^*$ lower-bounds $Q^*$, $\widehat{Q}^*(s,a)-Q^*(s,a)\leq 0, \forall (s,a) \in \mathcal{S}\times \mathcal{A}$. 
% And the equation holds when $\beta(a|s)>0$ for all state-action pairs.
And when the data covers the whole state-action space, i.e. $\beta(a|s)>0$ for all state-action pairs, we naturally have $\widehat{Q}^*(s,a)=Q^*(s,a)$.

\begin{lemma}
\label{lemma:empirical bellman contraction}
    The operator $\mathcal{\hat{B}}$ defined in \cref{eq:empirical bellman operator} is a $\gamma$-contraction operator under supremum norm,
\begin{equation*}
    \Vert \mathcal{\hat{B}}\widehat{Q}_1 - \mathcal{\hat{B}}\widehat{Q}_2 \Vert_\infty \leq \gamma\Vert \widehat{Q}_1 - \widehat{Q}_2 \Vert_\infty.
\end{equation*}
\end{lemma}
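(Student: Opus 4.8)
The plan is to mirror the classical argument that the Bellman optimality operator is a $\gamma$-contraction, taking care only that the inner maximization is over the restricted action set $\{a' : \beta(a'|s') > 0\}$ rather than all of $\mathcal{A}$. Fix two functions $\widehat{Q}_1, \widehat{Q}_2$ and an arbitrary state-action pair $(s,a)$. Writing out $(\mathcal{\hat{B}}\widehat{Q}_1)(s,a) - (\mathcal{\hat{B}}\widehat{Q}_2)(s,a)$ using \cref{eq:empirical bellman operator}, the reward terms $r$ cancel inside the expectation over $s'$, leaving
\begin{equation*}
(\mathcal{\hat{B}}\widehat{Q}_1)(s,a) - (\mathcal{\hat{B}}\widehat{Q}_2)(s,a) = \gamma \sum_{s'\in\mathcal{S}} P(s'|s,a)\Big[ \max_{a':\beta(a'|s')>0}\widehat{Q}_1(s',a') - \max_{a':\beta(a'|s')>0}\widehat{Q}_2(s',a') \Big].
\end{equation*}

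First I would bound the difference of the two constrained maxima pointwise in $s'$. The standard inequality $|\max_x f(x) - \max_x g(x)| \le \max_x |f(x) - g(x)|$ holds for a max taken over any common index set, so in particular over the set $\{a' : \beta(a'|s') > 0\}$, giving
\begin{equation*}
\Big| \max_{a':\beta(a'|s')>0}\widehat{Q}_1(s',a') - \max_{a':\beta(a'|s')>0}\widehat{Q}_2(s',a') \Big| \le \max_{a':\beta(a'|s')>0} \big| \widehat{Q}_1(s',a') - \widehat{Q}_2(s',a') \big| \le \Vert \widehat{Q}_1 - \widehat{Q}_2 \Vert_\infty,
\end{equation*}
where the last step only enlarges the index set of the max. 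Then I would take absolute values in the displayed identity, apply the triangle inequality across the sum over $s'$, use $\sum_{s'} P(s'|s,a) = 1$ together with the pointwise bound, obtaining $|(\mathcal{\hat{B}}\widehat{Q}_1)(s,a) - (\mathcal{\hat{B}}\widehat{Q}_2)(s,a)| \le \gamma \Vert \widehat{Q}_1 - \widehat{Q}_2 \Vert_\infty$. Since $(s,a)$ was arbitrary, taking the supremum over $(s,a)$ on the left yields the claim.

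There is no real obstacle here; the only point requiring a word of care is that the constrained max is well-defined, i.e.\ that $\{a' : \beta(a'|s') > 0\}$ is nonempty for every reachable $s'$ — this is guaranteed because any such $s'$ was entered via some transition recorded in the replay memory, so at least the action that produced the outgoing transition lies in the support. I would state this as a standing assumption (every state in $\mathcal{G}$ has at least one recorded outgoing action) and then the argument above goes through verbatim. The subsequent lemmas (convergence of the sampling iteration \cref{eq:empirical bellman} to the fixed point $\widehat{Q}^*$, and $\widehat{Q}^* \le Q^*$ with equality under full coverage) would then be handled separately, the first by the stochastic-approximation result cited for ordinary $Q$-learning and the second by comparing $\mathcal{\hat{B}}$ and $\mathcal{B}$ termwise (a restricted max never exceeds the unrestricted one) followed by an induction on the iteration count.
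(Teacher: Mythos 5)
Your proposal is correct and follows essentially the same route as the paper's proof: cancel the reward terms, bound the difference of the two constrained maxima by the max of the pointwise difference over the common restricted action set, enlarge to the supremum norm, and use $\sum_{s'}P(s'|s,a)=1$. Your added remark that the restricted action set must be nonempty for the operator to be well-defined is a sensible point of care that the paper leaves implicit.
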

\begin{proof}
We can rewrite $\Vert \mathcal{\hat{B}}\widehat{Q}_1 - \mathcal{\hat{B}}\widehat{Q}_2 \Vert_\infty$ as
\begin{equation*}
\begin{split}
& \quad \Vert \mathcal{\hat{B}}\widehat{Q}_1 - \mathcal{\hat{B}}\widehat{Q}_2 \Vert_\infty  \\
&= \max_{s,a} \Big\lvert \sum_{s^\prime\in \mathcal{S}}P(s^\prime|s,a)[r + \gamma\max_{a^\prime_1:\beta(a^\prime_1|s^\prime)>0}\widehat{Q}_1(s^\prime,a^\prime_1)] - P(s^\prime|s,a)[r + \gamma\max_{a^\prime_2:\beta(a^\prime_2|s^\prime)>0}\widehat{Q}_2(s^\prime,a^\prime_2)] \Big\rvert \\ 
&= \max_{s,a}\gamma \Big\lvert \sum_{s^\prime\in \mathcal{S}}P(s^\prime|s,a)[\max_{a^\prime_1:\beta(a^\prime_1|s^\prime)>0}\widehat{Q}_1(s^\prime,a^\prime_1) - \max_{a^\prime_2:\beta(a^\prime_2|s^\prime)>0}\widehat{Q}_2(s^\prime,a^\prime_2)] \Big\rvert \\
&\leq \max_{s,a}\gamma \sum_{s^\prime\in \mathcal{S}}P(s^\prime|s,a) \Big\lvert \max_{a^\prime_1:\beta(a^\prime_1|s^\prime)>0}\widehat{Q}_1(s^\prime,a^\prime_1) - \max_{a^\prime_2:\beta(a^\prime_2|s^\prime)>0}\widehat{Q}_2(s^\prime,a^\prime_2) \Big\rvert \\
&\leq \max_{s,a}\gamma \sum_{s^\prime\in \mathcal{S}}P(s^\prime|s,a) \max_{\Tilde{a}:\beta(\Tilde{a}|s^\prime)>0} \Big\lvert \widehat{Q}_1(s^\prime,\Tilde{a})-\widehat{Q}_2(s^\prime,\Tilde{a})\Big\rvert \\
&\leq \max_{s,a}\gamma \sum_{s^\prime\in \mathcal{S}}P(s^\prime|s,a) \max_{\Tilde{s},\Tilde{a}:\beta(\Tilde{a}|\Tilde{s})>0} \Big\lvert \widehat{Q}_1(\Tilde{s},\Tilde{a})-\widehat{Q}_2(\Tilde{s},\Tilde{a})\Big\rvert \\
&= \max_{s,a}\gamma \sum_{s^\prime\in \mathcal{S}}P(s^\prime|s,a) \Vert \widehat{Q}_1 - \widehat{Q}_2 \Vert_\infty \\
&= \gamma \Vert \widehat{Q}_1 - \widehat{Q}_2 \Vert_\infty,
\end{split}
\end{equation*}
where the last line follows from $\sum_{s^\prime\in \mathcal{S}}P(s^\prime|s,a)=1$.
\end{proof}
\vspace{0.2cm}

To show the sampling-based update rule in \cref{eq:empirical bellman} converges to $\widehat{Q}^*$, we borrow an auxiliary result from stochastic approximation \citep{robbins1951stochastic,jaakkola1993convergence}.
\vspace{0.2cm}

\begin{theorem}
\label{theorem:stochastic approximation}
The random process $\{\Delta_t\}$ taking values in $\mathbb{R}^n$ and defined as
\begin{equation}
\label{eq:random process}
    \Delta_{t+1}(x) = (1-\alpha_t(x))\Delta_t(x) + \alpha_t(x)F_t(x)
\end{equation}
converges to zero w.p.1 under the following assumptions:

\hspace*{1em} (1) $0 \leq \alpha_t \leq 1, \sum_t \alpha_t(x)=\infty$ and $\sum_t\alpha_t^2(x)<\infty$;

\hspace*{1em} (2)  $\Vert \mathbb{E}[F_t(x)|\mathcal{F}_t] \Vert_W \leq \gamma \Vert \Delta_t \Vert_W$, with $\gamma <1$;

\hspace*{1em} (3) $Var[F_t(x)|\mathcal{F}_t] \leq C(1+ \Vert \Delta_t \Vert_W^2)$, for $C>0$.
\end{theorem}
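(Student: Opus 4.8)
The plan is to recognize this as the classical Dvoretzky-type stochastic approximation theorem underlying asynchronous $Q$-learning (the form due to \citet{jaakkola1993convergence} and to Bertsekas--Tsitsiklis), and to prove it by separating the contractive mean dynamics from the martingale noise and then controlling the two jointly. First I would decompose $F_t(x) = g_t(x) + w_t(x)$, where $g_t(x) := \E[F_t(x)\mid\mathcal{F}_t]$ carries the contraction of assumption (2), i.e. $\|g_t\|_W \le \gamma\|\Delta_t\|_W$, and $w_t(x) := F_t(x) - g_t(x)$ is a martingale difference with $\E[w_t(x)\mid\mathcal{F}_t]=0$ and, by assumption (3), conditional second moment bounded by $C(1+\|\Delta_t\|_W^2)$. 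The recursion then reads $\Delta_{t+1}(x) = (1-\alpha_t(x))\Delta_t(x) + \alpha_t(x)\bigl(g_t(x)+w_t(x)\bigr)$, exhibiting it as a contraction toward zero perturbed by noise that vanishes on average.

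The second step is a stability argument: I would show $\sup_t\|\Delta_t\|_W < \infty$ almost surely. Because the mean part shrinks by the factor $\gamma<1$ while the noise variance grows at most quadratically in $\|\Delta_t\|_W$, the square-summability $\sum_t \alpha_t^2(x) < \infty$ prevents blow-up; I would make this rigorous by a localization/rescaling scheme, analyzing the iterates truncated at a level $B$ together with a stopping time and showing that level is almost surely never exceeded. Once boundedness holds, the conditional variance in (3) is uniformly bounded by $\bar C := C(1+B^2)$, which decouples the noise from the state.

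The third step handles the now-tame noise and the contraction. I would introduce the pure-noise recursion $W_{t+1}(x) = (1-\alpha_t(x))W_t(x) + \alpha_t(x)w_t(x)$ and show $W_t(x)\to 0$ a.s. for each coordinate: since the cross term vanishes in conditional expectation, $\E[W_{t+1}^2\mid\mathcal{F}_t] \le (1-\alpha_t)W_t^2 + \alpha_t^2\bar C$, and a Robbins--Siegmund supermartingale argument with $\sum_t\alpha_t^2<\infty$ and $\sum_t\alpha_t=\infty$ forces $W_t\to 0$, hence $\|W_t\|_W\to 0$. Then a comparison (induction-on-levels) argument compares $\Delta_t$ against the contraction envelope plus this vanishing noise: writing $G := \limsup_t\|\Delta_t\|_W$, the averaging effect of the steps together with $\|g_t\|_W \le \gamma\|\Delta_t\|_W$ yields, for every $\delta>0$, an eventual bound $\|\Delta_t\|_W \le (\gamma+\delta)G + o(1)$, so $G \le (\gamma+\delta)G$; letting $\delta\to 0$ and using $\gamma<1$ gives $G=0$, i.e. $\Delta_t\to 0$ with probability one.

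The main obstacle I anticipate is precisely the coupling between assumptions (2) and (3): the noise bound in (3) depends on $\|\Delta_t\|_W$, which is itself random and not a priori bounded, so one cannot invoke a fixed-variance noise lemma directly. The resolution --- establishing almost-sure boundedness jointly with the noise via the truncation/stopping-time scheme \emph{before} running the convergence argument --- is the technical crux and the step I would treat most carefully; the contraction bookkeeping and the Robbins--Monro step-size estimates are then largely routine. Since this is a standard result, in the paper it suffices to cite \citet{robbins1951stochastic,jaakkola1993convergence} and record this reduction.
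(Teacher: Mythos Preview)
Your proposal is correct and, in fact, goes well beyond what the paper does: the paper's entire proof of this theorem is the single line ``See~\citet{robbins1951stochastic,jaakkola1993convergence}.'' You correctly anticipate this in your final sentence, so there is no discrepancy to discuss.
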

$W$ is a norm. In our proof it is a supremum norm.
% \begin{equation*}
%     \Vert v \Vert_\infty = \max_{1\leq i \leq d} |v_i|,
% \end{equation*}
% $d$ is the dimension of vector $v$.
\begin{proof}
See~\citet{robbins1951stochastic,jaakkola1993convergence}.
\end{proof}
\vspace{0.2cm}

\begin{lemma}
\label{lemma:empirical q learning}
Given any initial estimation $\widehat{Q}_0$, the following update rule:
\begin{equation}
\label{eq:q-learning empirical bellman}
    \widehat{Q}_{t+1}(s_t,a_t) = \widehat{Q}_t(s_t,a_t) + \alpha_t(x_t,a_t)[r_t + \gamma \max_{a:\beta(a|s_{t+1})>0}\widehat{Q}_t(s_{t+1},a)-\widehat{Q}_t(s_t,a_t)],
\end{equation}
converges w.p.1 to the optimal action-value function $\widehat{Q}^*$ if
\begin{equation*}
0 \leq \alpha_t(s,a) \leq 1, \quad \sum_t \alpha_t(s,a)=\infty \quad and \quad \sum_t\alpha_t^2(s,a)<\infty,
\end{equation*}
for all $(s,a) \in \mathcal{S}\times \mathcal{A}$.
\end{lemma}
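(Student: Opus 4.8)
\textbf{Proof proposal for Lemma~\ref{lemma:empirical q learning}.}

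The plan is to cast the stochastic update rule~\cref{eq:q-learning empirical bellman} in the form of the random process~\cref{eq:random process} and then verify the three hypotheses of \cref{theorem:stochastic approximation}. Concretely, I would set $\Delta_t(s,a) = \widehat{Q}_t(s,a) - \widehat{Q}^*(s,a)$, where $\widehat{Q}^*$ is the fixed point of the empirical Bellman operator $\mathcal{\hat{B}}$ established in \cref{lemma:empirical bellman contraction}. Subtracting $\widehat{Q}^*(s_t,a_t)$ from both sides of~\cref{eq:q-learning empirical bellman} and using $\widehat{Q}^*(s_t,a_t) = (\mathcal{\hat{B}}\widehat{Q}^*)(s_t,a_t)$, the update becomes $\Delta_{t+1}(s_t,a_t) = (1-\alpha_t(s_t,a_t))\Delta_t(s_t,a_t) + \alpha_t(s_t,a_t)F_t(s_t,a_t)$ with
\begin{equation*}
F_t(s_t,a_t) = r_t + \gamma \max_{a:\beta(a|s_{t+1})>0}\widehat{Q}_t(s_{t+1},a) - \widehat{Q}^*(s_t,a_t).
\end{equation*}
This matches~\cref{eq:random process} exactly (for the visited coordinate; unvisited coordinates have $\alpha_t = 0$, so $\Delta_{t+1} = \Delta_t$ trivially there).

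Next I would verify the assumptions, taking $\mathcal{F}_t$ to be the $\sigma$-algebra generated by the history up to time $t$ and $W = \|\cdot\|_\infty$. Assumption (1) is precisely the Robbins--Monro step-size condition assumed in the statement of the lemma. For assumption (2), I compute $\mathbb{E}[F_t(s_t,a_t)\mid\mathcal{F}_t] = \sum_{s'} P(s'|s_t,a_t)\bigl[r + \gamma \max_{a:\beta(a|s')>0}\widehat{Q}_t(s',a)\bigr] - \widehat{Q}^*(s_t,a_t) = (\mathcal{\hat{B}}\widehat{Q}_t)(s_t,a_t) - (\mathcal{\hat{B}}\widehat{Q}^*)(s_t,a_t)$, and then $\|\mathbb{E}[F_t\mid\mathcal{F}_t]\|_\infty \le \|\mathcal{\hat{B}}\widehat{Q}_t - \mathcal{\hat{B}}\widehat{Q}^*\|_\infty \le \gamma\|\widehat{Q}_t - \widehat{Q}^*\|_\infty = \gamma\|\Delta_t\|_\infty$, invoking the $\gamma$-contraction property from \cref{lemma:empirical bellman contraction} with $\gamma < 1$. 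For assumption (3), I would write $F_t(s_t,a_t) = \bigl(F_t(s_t,a_t) - \mathbb{E}[F_t\mid\mathcal{F}_t]\bigr) + \mathbb{E}[F_t\mid\mathcal{F}_t]$, bound $\operatorname{Var}[F_t\mid\mathcal{F}_t]$ by the variance of $r_t + \gamma\max_{a:\beta(a|s_{t+1})>0}\widehat{Q}_t(s_{t+1},a)$, and use boundedness of the reward together with the elementary inequality $\operatorname{Var}[X] \le \mathbb{E}[X^2]$ to get $\operatorname{Var}[F_t\mid\mathcal{F}_t] \le C(1 + \|\Delta_t\|_\infty^2)$ for a suitable constant $C$; here one uses that $\max_{a}\widehat{Q}_t(s',a)$ grows at most linearly in $\|\Delta_t\|_\infty$ since $\widehat{Q}^*$ is a fixed bounded function. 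Then \cref{theorem:stochastic approximation} gives $\Delta_t \to 0$ w.p.1, i.e.\ $\widehat{Q}_t \to \widehat{Q}^*$.

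The main obstacle I anticipate is the clean verification of assumption (2): one must be careful that the constrained-$\max$ operator appearing in~\cref{eq:q-learning empirical bellman} is exactly the one-step application of $\mathcal{\hat{B}}$ to $\widehat{Q}_t$ (not some mix of $\widehat{Q}_t$ and $\widehat{Q}^*$), so that the contraction lemma applies verbatim; this hinges on the fact that the support set $\{a : \beta(a|s')>0\}$ in the sampled update is the \emph{same} support set used to define $\mathcal{\hat{B}}$. A secondary technical point is the boundedness needed for assumption (3): implicitly one assumes bounded rewards (the learned reward $\widehat{r}_\psi$ is clipped to $[-1,1]$ in the paper), which makes $\widehat{Q}^*$ bounded and keeps the variance bound linear in $\|\Delta_t\|_\infty^2$. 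Everything else is a routine translation of the well-known tabular $Q$-learning convergence argument (as in \citet{jaakkola1993convergence, melo2001convergence}) to the in-support update rule.
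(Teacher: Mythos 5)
Your proposal is correct and follows essentially the same route as the paper's proof: the same decomposition $\Delta_t = \widehat{Q}_t - \widehat{Q}^*$ and $F_t(s,a) = r + \gamma\max_{a':\beta(a'|s')>0}\widehat{Q}_t(s',a') - \widehat{Q}^*(s,a)$, verification of the Robbins--Monro step-size condition, the identity $\mathbb{E}[F_t\mid\mathcal{F}_t] = (\mathcal{\hat{B}}\widehat{Q}_t)(s,a) - (\mathcal{\hat{B}}\widehat{Q}^*)(s,a)$ combined with the contraction property of $\mathcal{\hat{B}}$ from \cref{lemma:empirical bellman contraction}, and a variance bound via bounded rewards and an add-and-subtract of the $\widehat{Q}^*$ term. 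The two technical cautions you flag (matching support sets in the constrained max, and boundedness for the variance condition) are exactly the points the paper's argument relies on.
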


% Then when using sampling, it can be considered as a random process. We prove it satisfies the conditions that guarantee the random process converges with probability 1. To prove $\widehat{Q}^*$ lower-bounds $Q^*$, we can derive  $Q^*(s,a)-\widehat{Q}^*(s,a)\ge 0$ from~\cref{eq:bellman} and~\cref{eq:empirical bellman}.
% Since the Empirical Bellman operator $\mathcal{\hat{B}}$ is for the whole state-action space, and \cref{eq:empirical bellman} uses point samples. 
% The update rule can be written as:
% where the step-sizes $0 \leq \alpha_t(s,a) \leq 1$. This means at each update, only one state-action pair is updated. To prove this update converges, we need an auxiliary result from stochastic approximation~\citep{robbins1951stochastic,jaakkola1993convergence}.

\begin{proof}
Based on \cref{theorem:stochastic approximation}, we prove the update rule in \cref{eq:q-learning empirical bellman} converges. 

Rewrite \cref{eq:q-learning empirical bellman} as
\begin{equation*}
    \widehat{Q}_{t+1}(s_t,a_t) = (1-\alpha_t(s_t,a_t))\widehat{Q}_t(s_t,a_t) + \alpha_t(x_t,a_t)[r_t + \gamma \max_{a:\beta(a|s_{t+1})>0}\widehat{Q}_t(s_{t+1},a)]
\end{equation*}
Subtract $\widehat{Q}^*(s_t,a_t)$ from both sides:
\begin{equation*}
\begin{split}
& \quad \widehat{Q}_{t+1}(s_t,a_t) - \widehat{Q}^*(s_t,a_t) \\ 
&= (1-\alpha_t(s_t,a_t))(\widehat{Q}_t(s_t,a_t)-\widehat{Q}^*(s_t,a_t)) + \alpha_t(x_t,a_t)[r_t + \gamma \max_{a:\beta(a|s_{t+1})>0}\widehat{Q}_t(s_{t+1},a)-\widehat{Q}^*(s_t,a_t)]
\end{split}
\end{equation*}
Let 
\begin{equation}
\label{eq:delta}
    \Delta_t(s,a)=\widehat{Q}(s,a)-\widehat{Q}^*(s,a)
\end{equation}
and 
\begin{equation}
\label{eq:F}
    F_t(s,a) = r + \gamma \max_{a^\prime:\beta(a^\prime|s^\prime)>0}\widehat{Q}_t(s^\prime,a^\prime)-\widehat{Q}^*(s,a).
\end{equation}
We get the same random process shown in \cref{theorem:stochastic approximation} \cref{eq:random process}.
Then, proving $\lim_{t\rightarrow \infty} \widehat{Q}_t = \widehat{Q}^*$ is the same as proving $\Delta_t(s,a)$ converges to zero with probability 1. We only need to show the assumptions in \cref{theorem:stochastic approximation} are satisfied under the definitions of \cref{eq:delta,eq:F}.

\cref{theorem:stochastic approximation} (1) is the same as the condition in \cref{lemma:empirical q learning}. It is easy to achieve, for example, we can choose $\alpha_t(s,a) = 1/t$.

For \cref{theorem:stochastic approximation} (2), we have
\begin{equation*}
\begin{split}
\mathbb{E}[F_t(s,a)|\mathcal{F}_t] &= \sum_{s^\prime \in \mathcal{S}}P(s^\prime|s,a)[r+\gamma\max_{a^\prime:\beta(a^\prime|s^\prime)}\widehat{Q}_t(s^\prime,a^\prime)-\widehat{Q}^*(s,a)] \\
&= (\mathcal{\hat{B}}\widehat{Q}_t)(s,a)-\widehat{Q}^*(s,a) \\
&= (\mathcal{\hat{B}}\widehat{Q}_t)(s,a)-(\mathcal{\hat{B}}\widehat{Q}^*)(s,a)
\end{split}
\end{equation*}
Thus,
\begin{equation*}
\begin{split}
\Vert \mathbb{E}[F_t(s,a)|\mathcal{F}_t] \Vert_\infty &= \Vert (\mathcal{\hat{B}}\widehat{Q}_t)-(\mathcal{\hat{B}}\widehat{Q}^*) \Vert_\infty \\
&\leq \gamma \Vert \widehat{Q}_t - \widehat{Q}^* \Vert_\infty \\
&= \gamma \Vert \Delta_t \Vert_\infty,
\end{split}
\end{equation*}
with $\gamma <1$. 

For \cref{theorem:stochastic approximation} (3), we have
\begin{equation*}
\begin{split}
Var[F_t(s)|\mathcal{F}_t] &= \mathbb{E}[F_t(s)-\mathbb{E}[F_t(s)|\mathcal{F}_t]|\mathcal{F}_t]^2 \\
&= \mathbb{E}[F_t(s)-((\mathcal{\hat{B}}\widehat{Q}_t)(s,a)-(\mathcal{\hat{B}}\widehat{Q}^*)(s,a))]^2 \\
&= \mathbb{E}[r + \gamma \max_{a^\prime:\beta(a^\prime|s^\prime)>0}\widehat{Q}_t(s^\prime,a^\prime)-\widehat{Q}^*(s,a)-((\mathcal{\hat{B}}\widehat{Q}_t)(s,a)-(\mathcal{\hat{B}}\widehat{Q}^*)(s,a))]^2 \\
&= \mathbb{E}[r + \gamma \max_{a^\prime:\beta(a^\prime|s^\prime)>0}\widehat{Q}_t(s^\prime,a^\prime) - (\mathcal{\hat{B}}\widehat{Q}_t)(s,a)]^2 \\
&= Var[r + \gamma \max_{a^\prime:\beta(a^\prime|s^\prime)>0}\widehat{Q}_t(s^\prime,a^\prime)|\mathcal{F}_t]
\end{split}
\end{equation*}
We add and minus a $\widehat{Q}^*$ term to make it close to the RHS in \cref{theorem:stochastic approximation} (3):
\begin{equation*}
    Var[r + \gamma \max_{a^\prime:\beta(a^\prime|s^\prime)>0}\widehat{Q}^*(s^\prime,a^\prime) + \gamma \max_{a^\prime:\beta(a^\prime|s^\prime)>0}\widehat{Q}_t(s^\prime,a^\prime) - \gamma \max_{a^\prime:\beta(a^\prime|s^\prime)>0}\widehat{Q}^*(s^\prime,a^\prime)|\mathcal{F}_t]
\end{equation*}

Since $r$ is bounded, thus $r + \gamma \max_{a^\prime:\beta(a^\prime|s^\prime)>0}\widehat{Q}^*(s^\prime,a^\prime)$ is bounded.
And clearly the second part
$\max_{a^\prime:\beta(a^\prime|s^\prime)>0}\widehat{Q}_t(s^\prime,a^\prime) - \max_{a^\prime:\beta(a^\prime|s^\prime)>0}\widehat{Q}^*(s^\prime,a^\prime)$
can be bounded by $\Vert \Delta_t \Vert_\infty$ with some constant.  Thus, we have 
\begin{equation*}
Var[F_t(s)|\mathcal{F}_t] \leq  C(1+ \Vert \Delta_t \Vert_\infty^2),
\end{equation*}
for some constant $C>0$ under supremum norm.
Thus, by \cref{theorem:stochastic approximation}, $\Delta_t$ converges to zero w.p.1, i.e., $\widehat{Q}_t$ converges to $\widehat{Q}^*$ w.p.1.
\end{proof}
\vspace{0.2cm}

\begin{lemma}
\label{lemma:lower bound}
The value estimation obtained by \cref{eq:empirical bellman operator} lower-bounds the value estimation obtained by \cref{eq:bellman operator}:
\begin{equation}
\widehat{Q}^*(s,a)-Q^*(s,a)\leq 0
\end{equation}
for all state-action pairs.
\end{lemma}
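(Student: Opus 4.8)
The plan is to show $\widehat{Q}^*(s,a) \le Q^*(s,a)$ for all $(s,a)$ by comparing the two fixed points through their defining Bellman equations and then iterating the contraction. First I would write the fixed-point identity for both operators: $\widehat{Q}^*(s,a) = (\mathcal{\hat{B}}\widehat{Q}^*)(s,a)$ and $Q^*(s,a) = (\mathcal{B}Q^*)(s,a)$. Subtracting, I would expand
\begin{equation*}
\widehat{Q}^*(s,a) - Q^*(s,a) = \gamma \sum_{s^\prime \in \mathcal{S}} P(s^\prime|s,a)\Big[ \max_{a^\prime:\beta(a^\prime|s^\prime)>0} \widehat{Q}^*(s^\prime,a^\prime) - \max_{a^\prime} Q^*(s^\prime,a^\prime)\Big].
\end{equation*}
The key observation is that the constrained max is taken over a subset of the action space, so $\max_{a^\prime:\beta(a^\prime|s^\prime)>0} \widehat{Q}^*(s^\prime,a^\prime) \le \max_{a^\prime} \widehat{Q}^*(s^\prime,a^\prime)$, and then I would bound the right-hand bracket by $\max_{a^\prime}\widehat{Q}^*(s^\prime,a^\prime) - \max_{a^\prime}Q^*(s^\prime,a^\prime) \le \max_{a^\prime}\big(\widehat{Q}^*(s^\prime,a^\prime) - Q^*(s^\prime,a^\prime)\big) \le \Vert \widehat{Q}^* - Q^* \Vert_\infty$ (using the standard inequality $\max_a f(a) - \max_a g(a) \le \max_a (f(a)-g(a))$, which also needs the nonemptiness of $\{a^\prime : \beta(a^\prime|s^\prime)>0\}$, i.e. every visited state has at least one supported action).

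Next, to turn this into a one-sided bound rather than a symmetric $\ell_\infty$ bound, I would set $\delta = \sup_{s,a}\big(\widehat{Q}^*(s,a) - Q^*(s,a)\big)$ and argue that the displayed identity gives $\widehat{Q}^*(s,a) - Q^*(s,a) \le \gamma \delta$ for every $(s,a)$, taking the supremum over $(s,a)$ yields $\delta \le \gamma\delta$, and since $\gamma < 1$ this forces $\delta \le 0$, i.e. $\widehat{Q}^*(s,a) \le Q^*(s,a)$ everywhere. An alternative, essentially equivalent route is a monotone-iteration argument: define the sequence $\widehat{Q}^{(0)} = Q^*$ and $\widehat{Q}^{(n+1)} = \mathcal{\hat{B}}\widehat{Q}^{(n)}$; show $\mathcal{\hat{B}}Q^* \le \mathcal{B}Q^* = Q^*$ because the constrained max never exceeds the unconstrained one, then use monotonicity of $\mathcal{\hat{B}}$ (which follows from $P(\cdot|s,a)\ge 0$ and the max being monotone) to get a decreasing sequence $\widehat{Q}^{(n)} \le \widehat{Q}^{(n-1)} \le \cdots \le Q^*$, and invoke $\widehat{Q}^{(n)} \to \widehat{Q}^*$ from Lemma~\ref{lemma:empirical bellman contraction} to conclude $\widehat{Q}^* \le Q^*$.

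Finally, for the equality claim: if $\beta(a|s) > 0$ for all state-action pairs, the constraint set in \cref{eq:empirical bellman operator} is the full action space, so $\mathcal{\hat{B}} = \mathcal{B}$ as operators, hence they have the same unique fixed point and $\widehat{Q}^* = Q^*$. I expect the main obstacle to be purely a matter of care rather than depth: making sure the constrained $\max$ is well-defined (the support set is nonempty at every reachable state), and correctly handling the direction of the inequalities so that the symmetric contraction bound is sharpened into the one-sided $\delta \le \gamma\delta$ conclusion. The monotonicity route sidesteps the second subtlety cleanly, so I would likely present that one as the main argument.
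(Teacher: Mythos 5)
Your proposal is correct, and your primary argument --- writing both fixed points via their Bellman equations, bounding the constrained max by the unconstrained one, applying $\max_a f - \max_a g \le \max_a (f-g)$, and closing with $\delta \le \gamma\delta \Rightarrow \delta \le 0$ --- is essentially the paper's own proof (the paper reaches the same one-sided inequality $\max_{s,a}(\widehat{Q}^*-Q^*) \le \gamma \max_{s,a}(\widehat{Q}^*-Q^*)$ and then iterates to $\gamma^n$ and takes a limit rather than dividing by $1-\gamma$, a purely cosmetic difference, and handles the equality case identically via coincidence of the two operators). The monotone-iteration alternative you sketch is also sound and arguably cleaner, but it is not the route the paper takes; your remark about nonemptiness of the support set $\{a':\beta(a'|s')>0\}$ is a legitimate hypothesis the paper leaves implicit.
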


\begin{proof}
Following the definition of \cref{eq:bellman operator,eq:empirical bellman operator}, we can rewrite as
\begin{equation*}
\begin{split}
& \quad \max_{s,a} (\widehat{Q}^*(s,a)-Q^*(s,a)) \\
&= \max_{s,a} (\hat{\mathcal{B}}\widehat{Q}^*(s,a)-\mathcal{B}Q^*(s,a)) \\ 
&= \max_{s,a} (\sum_{s^\prime\in \mathcal{S}}P(s^\prime|s,a)[r+\gamma \max_{\hat{a}^\prime:\beta(\hat{a}^\prime|s^\prime)>0}\widehat{Q}^*(s^\prime,\hat{a}^\prime)] - \sum_{s^\prime\in \mathcal{S}}P(s^\prime|s,a)[r+\gamma \max_{a^\prime}Q^*(s^\prime,a^\prime)]) \\
&= \max_{s,a}\sum_{s^\prime\in \mathcal{S}}P(s^\prime|s,a)\gamma (\max_{\hat{a}^\prime:\beta(\hat{a}^\prime|s^\prime)>0}\widehat{Q}^*(s^\prime,\hat{a}^\prime) - \max_{a^\prime}Q^*(s^\prime,a^\prime)) \\
&\leq \max_{s,a}\sum_{s^\prime\in \mathcal{S}}P(s^\prime|s,a)\gamma (\max_{\hat{a}^\prime}\widehat{Q}^*(s^\prime,\hat{a}^\prime) - \max_{a^\prime}Q^*(s^\prime,a^\prime)) \\
&\leq \max_{s,a}\sum_{s^\prime\in \mathcal{S}}P(s^\prime|s,a)\gamma \max_{\Tilde{a}}(\widehat{Q}^*(s^\prime,\Tilde{a}) - Q^*(s^\prime,\Tilde{a})) \\
&\leq \max_{s,a}\gamma\sum_{s^\prime\in \mathcal{S}}P(s^\prime|s,a) \max_{\Tilde{s},\Tilde{a}}(\widehat{Q}^*(\Tilde{s},\Tilde{a}) - Q^*(\Tilde{s},\Tilde{a})) \\
&= \gamma \max_{\Tilde{s},\Tilde{a}}(\widehat{Q}^*(\Tilde{s},\Tilde{a}) - Q^*(\Tilde{s},\Tilde{a})) = \gamma \max_{s,a} (\widehat{Q}^*(s,a)-Q^*(s,a))
\end{split}
\end{equation*}
where the last line follows from $\sum_{s^\prime\in \mathcal{S}}P(s^\prime|s,a)=1$. Then we have
\begin{equation*}
\begin{split}
\max_{s,a} (\widehat{Q}^*(s,a)-Q^*(s,a)) &\leq \gamma \max_{s,a} (\widehat{Q}^*(s,a)-Q^*(s,a)) \\
&\leq \gamma^2 \max_{s,a} (\widehat{Q}^*(s,a)-Q^*(s,a)) \\
&\leq \cdots \\
&\leq \gamma^n \max_{s,a} (\widehat{Q}^*(s,a)-Q^*(s,a))
\end{split}
\end{equation*}
Take limit for both sides and since $0<\gamma<1$, we have $\max_{s,a} (\widehat{Q}^*(s,a)-Q^*(s,a)) \leq 0$.

When $\beta(a|s)>0$ for all state-action pairs, the two contraction operators $\mathcal{\hat{B}}$ and $\mathcal{B}$ are the same. 
% Since $\mathcal{\hat{B}}$ and $\mathcal{B}$ are contractions, 
And based on Banach's fixed-point theorem, there is a unique fixed point.
% $\mathcal{\hat{B}}$ and $\mathcal{B}$ have unique fixed point, 
Thus $\widehat{Q}^*(s,a)=Q^*(s,a)$ for all state-action pairs., i.e., $\widehat{Q}^*(s,a)-Q^*(s,a)=0, (s,a) \in \mathcal{S}\times\mathcal{A}$ holds when $\beta(a|s)>0$ for all state-action pairs.
\end{proof}
Then, we get \cref{theorem:bellman} proved with \cref{lemma:empirical bellman contraction,lemma:empirical q learning,lemma:lower bound}.

\section{Environment Specifications} \label{appendix:task_specifications}
In this section, we delineate the tasks utilized in our experiments. For online settings, discrete tasks include Sokoban~\citep{SchraderSokoban2018} and Craftenv~\citep{ZhaoCraftEnv2023}, while continuous tasks encompass robotic manipulation challenges from Meta-world~\cite{yu2020meta} and locomotion tasks from the DeepMind Control Suite (DMControl)~\cite{tassa2018deepmind, tunyasuvunakool2020dm_control}. Regarding offline settings, we incorporate control tasks from the D4RL benchmarks~\citep{fu2020d4rl}.

\subsection{Sokoban}\label{appendix:sokoban_intro}
Sokoban~\citep{SchraderSokoban2018}, the Japanese word for 'a warehouse keeper', is a puzzle video game, which is analogous to the problem of having an agent in a warehouse push some specified boxes from their initial locations to target locations. Target locations have the same number of boxes. The goal of the game is to manipulate the agent to move all boxes to the target locations. Specifically, the game is played on a rectangular grid called a room, and each cell of the room is either a floor or a wall. At each new episode, the environment will be reset, which means the layout of the room is randomly generated, including the floors, the walls, the target locations, the boxes' initial locations, and the location of the agent. We choose six tasks with different complexities from Push-5×5-1 to Push-7×7-2, which is shown in Figure~\ref{fig:sokoban}. The numbers in the task name denote respectively the size of the grid and the number of boxes. 

\noindent \textbf{State Space.} The state space consists of all possible images displayed on the screen. Each image has the same size as the map, and using the way of dividing each pixel of the image by 255 to normalize into [0,1], we preprocess the image to the inputting state. 

\noindent \textbf{Action Space.} The action space of Sokoban has a total of eight actions, composed of moving and pushing the box in four directions, which are \textit{left}, \textit{right}, \textit{up}, \textit{down}, \textit{push-left}, \textit{push-right}, \textit{push-up}, \textit{push-down} in detail.

\noindent \textbf{Reward Setting.} The agent gets a punishment with a -0.1 reward after each time step. Successfully pushing a box to the target location, can get a +1 reward, and if all boxes are laid in the right locations, the agent can obtain an extra +10 reward. We set the max episode steps to 120, which means the cumulative reward during one episode ranges from -12 to 10 plus the number of boxes.

\begin{figure*}[!ht]
\centering
\begin{tabular}{cccc}
\hspace*{-1.1em} \subfloat[Push-5x5-1]{\includegraphics[width=0.25\linewidth]{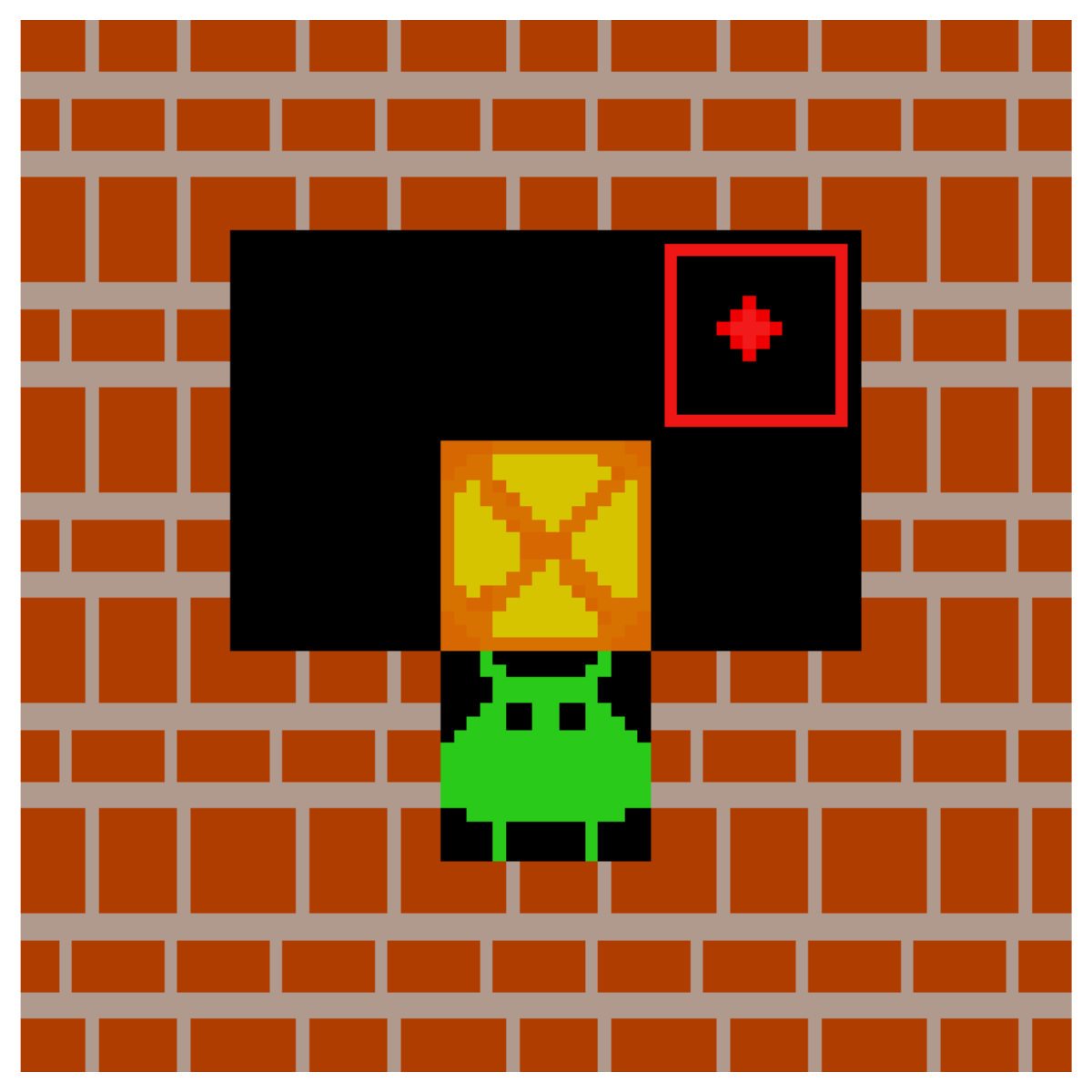}}
& \hspace*{-1.1em} \subfloat[Push-5x5-2]{\includegraphics[width=0.25\linewidth]{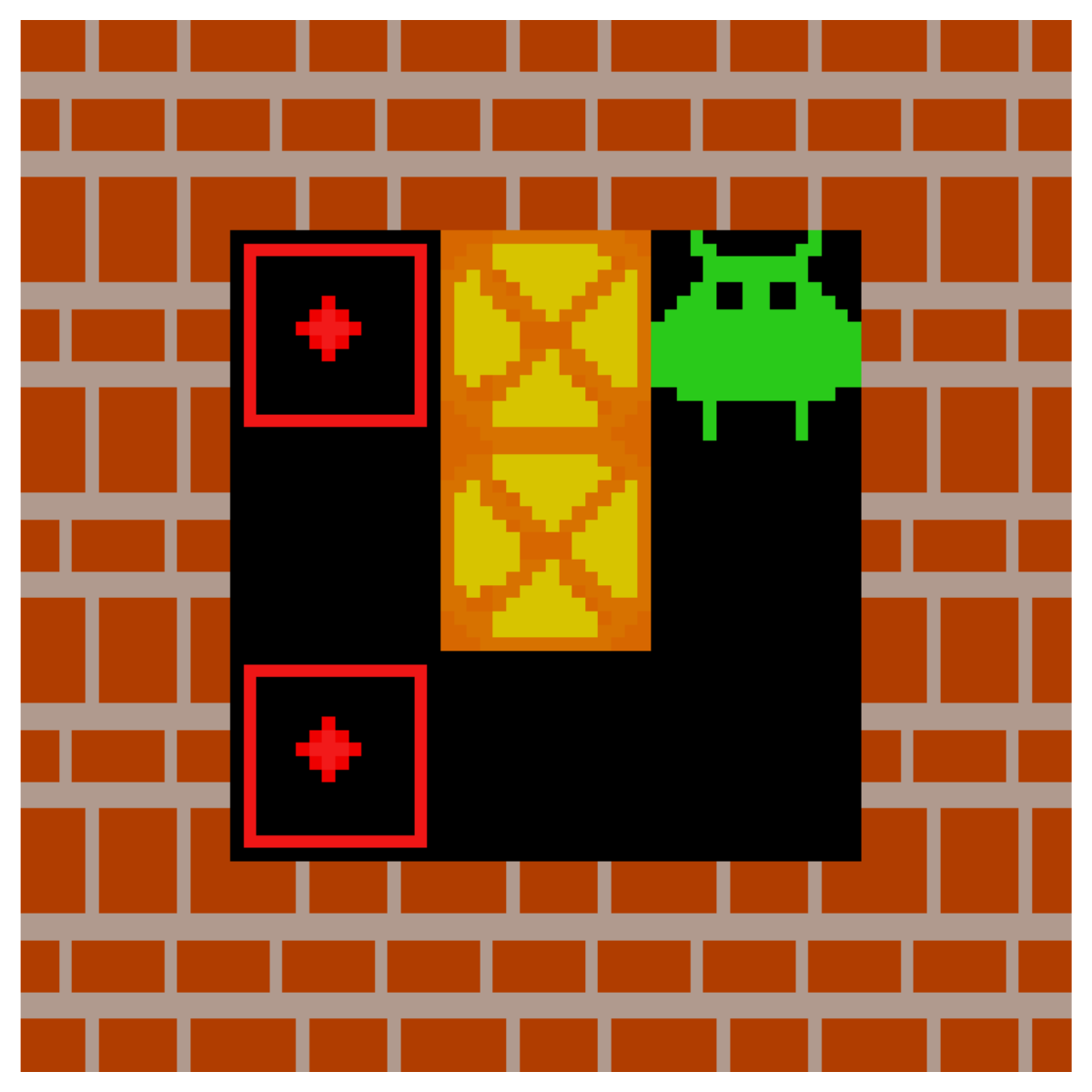}}
& \hspace*{-1.1em} \subfloat[Push-6x6-1]{\includegraphics[width=0.25\linewidth]{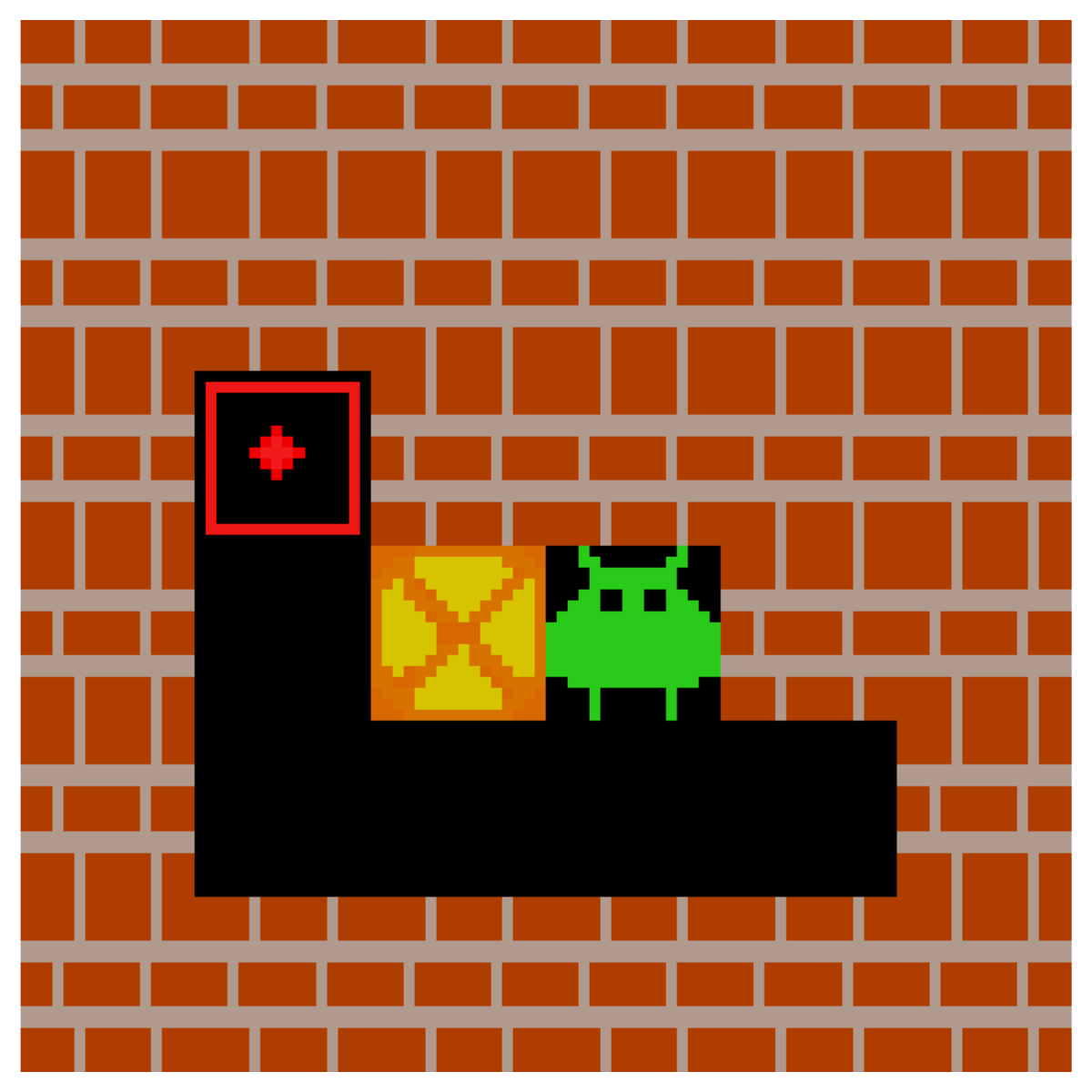}}
& \hspace*{-1.1em} \subfloat[Push-6x6-2]{\includegraphics[width=0.25\linewidth]{figures/envs/Sokoban661.pdf}}
\end{tabular}
\caption{Visualization of puzzle tasks from Sokoban, which focuses on evaluating the capabilities of agents in spatial reasoning, logical deduction, and long-term planning.}
\label{fig:sokoban}
\end{figure*}

\subsection{CraftEnv}\label{appendix:craftenv_intro}
Craftenv~\citep{ZhaoCraftEnv2023}, A Flexible Robotic Construction Environment, is a collection of construction tasks. The agent needs to learn to manipulate the elements, including smartcar, blocks, and slopes, to achieve a target structure through efficient and effective policy. Each construction task is a simulation of the corresponding complex real-world task, which is challenging enough for reinforcement learning algorithms. Meanwhile, the CraftEnv is highly malleable, enabling researchers to design their own tasks for specific requirements. The environment is simple to use since it is implemented by Python and can be rendered using PyBullet. We choose three different designs of the building tasks, shown in Figure~\ref{fig:craftenv}, to evaluate our algorithm in CraftEnv.

\noindent \textbf{State Space.} We assume that the agent can obtain all the information in the map. Therefore, the state consists of all knowledge of smartcar, blocks, folded slopes, unfolded slopes' body, and unfolded slopes' foot, including the position and the yaw angle.

\noindent \textbf{Action Space.} The available actions of an agent are designed based on real-world smartcar models, including a total of fifteen actions. Besides all eight directions moving actions, i.e. \textit{forward}, \textit{backward}, \textit{left}, \textit{right}, \textit{left-forward}, \textit{left-backward}, \textit{right-forward}, and \textit{right-backward}, there are interaction-related actions, designed to simulate the building process in the real world. Specifically, the agent can act \textit{lift} and \textit{drop} actions to decide whether or not to carry the surrounding basic element, and can \textit{flod} or \textit{unflod} slopes to build the complex buildings. In addition, the actions of \textit{rotate-left} and \textit{rotate-right} control the agent to rotate the main body to the left and right, and \textit{stop} action is just a non-action.

\noindent \textbf{Reward Setting.} CraftEnv is a flexible environment as mentioned above. We can specify our own reward function for different construction tasks. For the relatively simple tasks of building with specified shape requirements, we can use discrete reward, where some reward is given when part of the blueprint is built. While, for building tasks with high complexity, various reward patterns should be designed to encourage the agent to build with different intentions.

\begin{figure*}[!ht]
\centering
\begin{tabular}{cccc}
\hspace*{-1.1em} \subfloat[The Strip-shaped Building]{\includegraphics[width=0.33\linewidth]{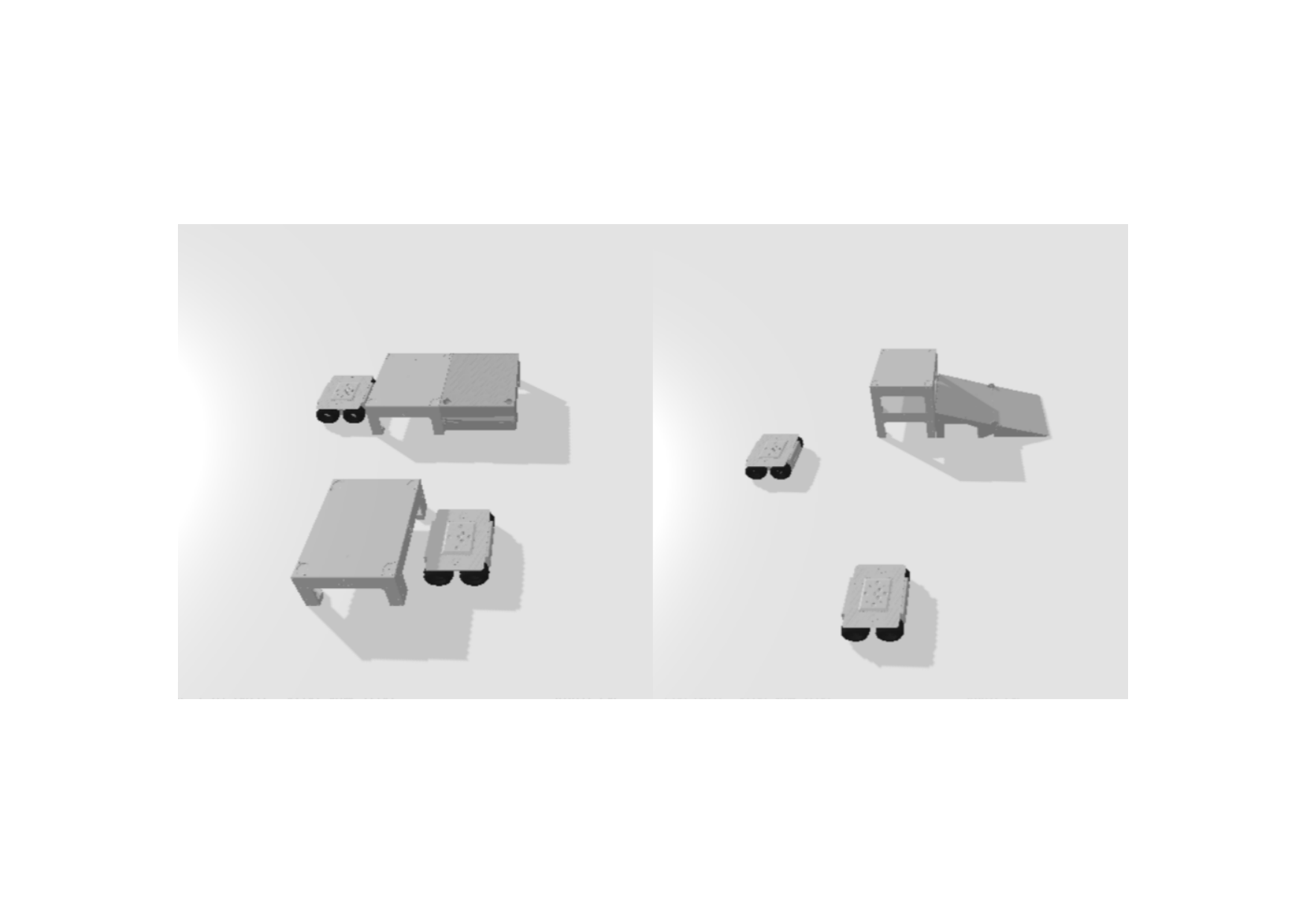}}
& \hspace*{-1.1em} \subfloat[The Block-shaped Building]{\includegraphics[width=0.33\linewidth]{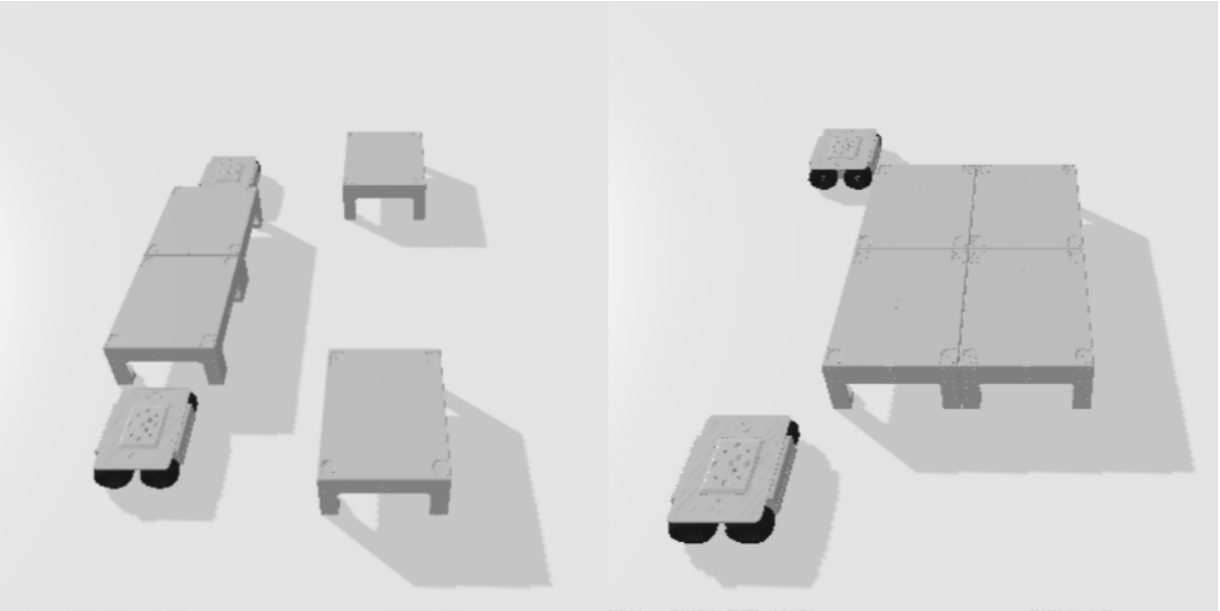}}
& \hspace*{-1.1em} \subfloat[The Simple Two-Story Building]{\includegraphics[width=0.33\linewidth]{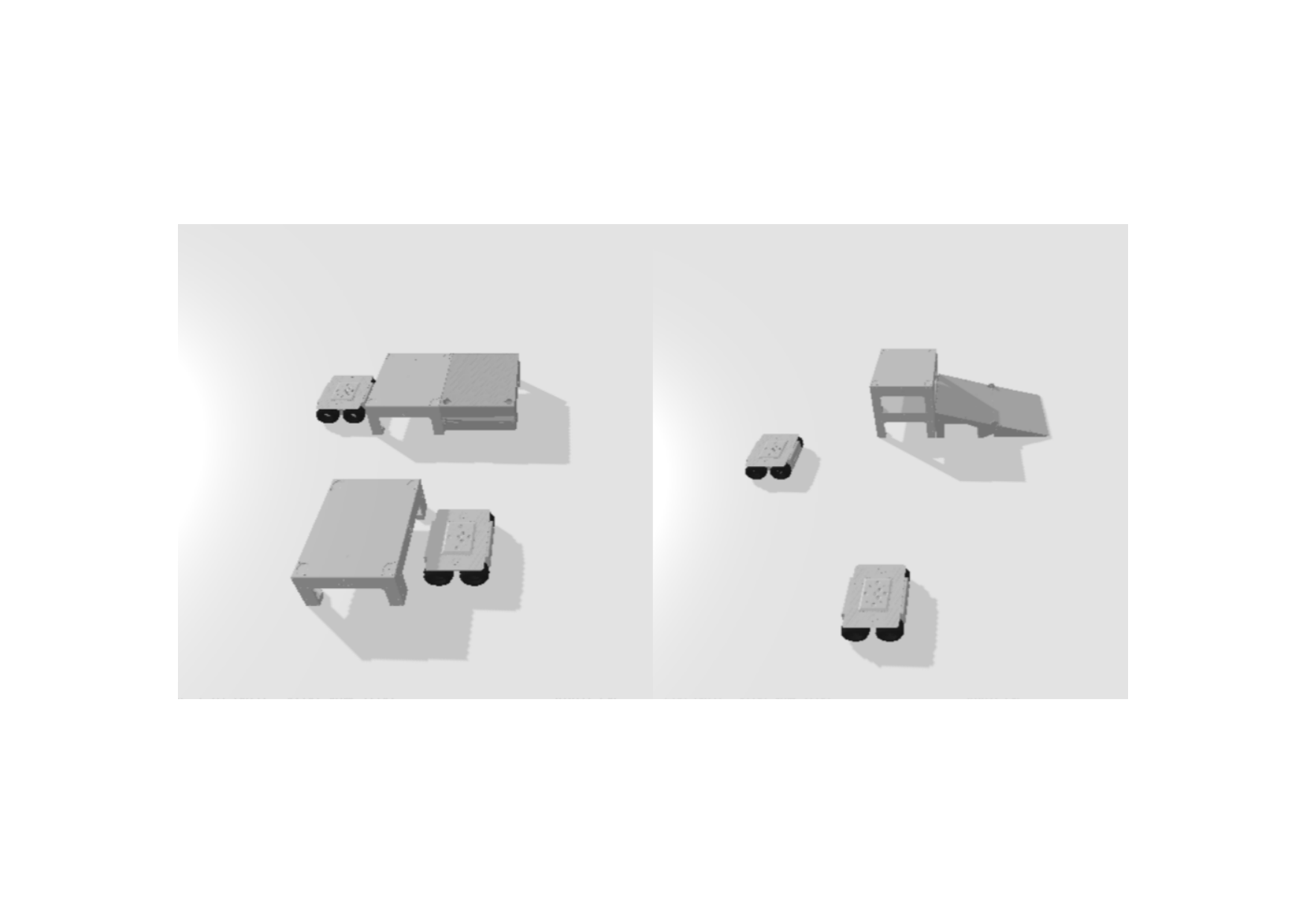}}
\end{tabular}
\caption{Visualization of building tasks from CraftEnv. From left to right are The Strip-shaped Building, The Block-shaped Building, and The Simple Two-Story Building task respectively.}
\label{fig:craftenv}
\end{figure*}

\subsection{Robotic tasks}
In our experiments, we utilize robotic manipulation tasks from Meta-world~\cite{yu2020meta} and locomotion tasks from the DeepMind Control Suite (DMControl)~\cite{tassa2018deepmind, tunyasuvunakool2020dm_control}, which is visualized in Figure~\ref{fig:render_tasks}. Meta-World comprises 50 diverse manipulation tasks with a common structural framework, while DMControl offers a collection of stable, rigorously tested tasks, serving as benchmarks for continuous control learning agents. These tasks are equipped with well-formulated reward functions that facilitate agent learning. For performance evaluation, Meta-world introduces an interpretable success metric for each task, which serves as the standard evaluation criterion across various settings. This metric often hinges on the proximity of task-relevant objects to their final goal positions, quantified as $||o-g|| \leq \epsilon$, with $\epsilon$ representing a nominal distance threshold, such as 5 cm. DMControl, conversely, employs episode returns for evaluation. We adopt fixed-length episodes of 1000 timesteps as a proxy. Given that all reward functions are designed such that $r \approx 1$ is in proximity to goal states, learning curves that measure total returns can maintain consistent y-axis limits of $[0,1000]$. This uniformity simplifies interpretation and facilitates averaging across tasks.

\begin{figure*}[!t]
\centering
\begin{tabular}{ccc}
\subfloat[Cheetah]{\includegraphics[width=0.27\linewidth, height=0.18\linewidth]{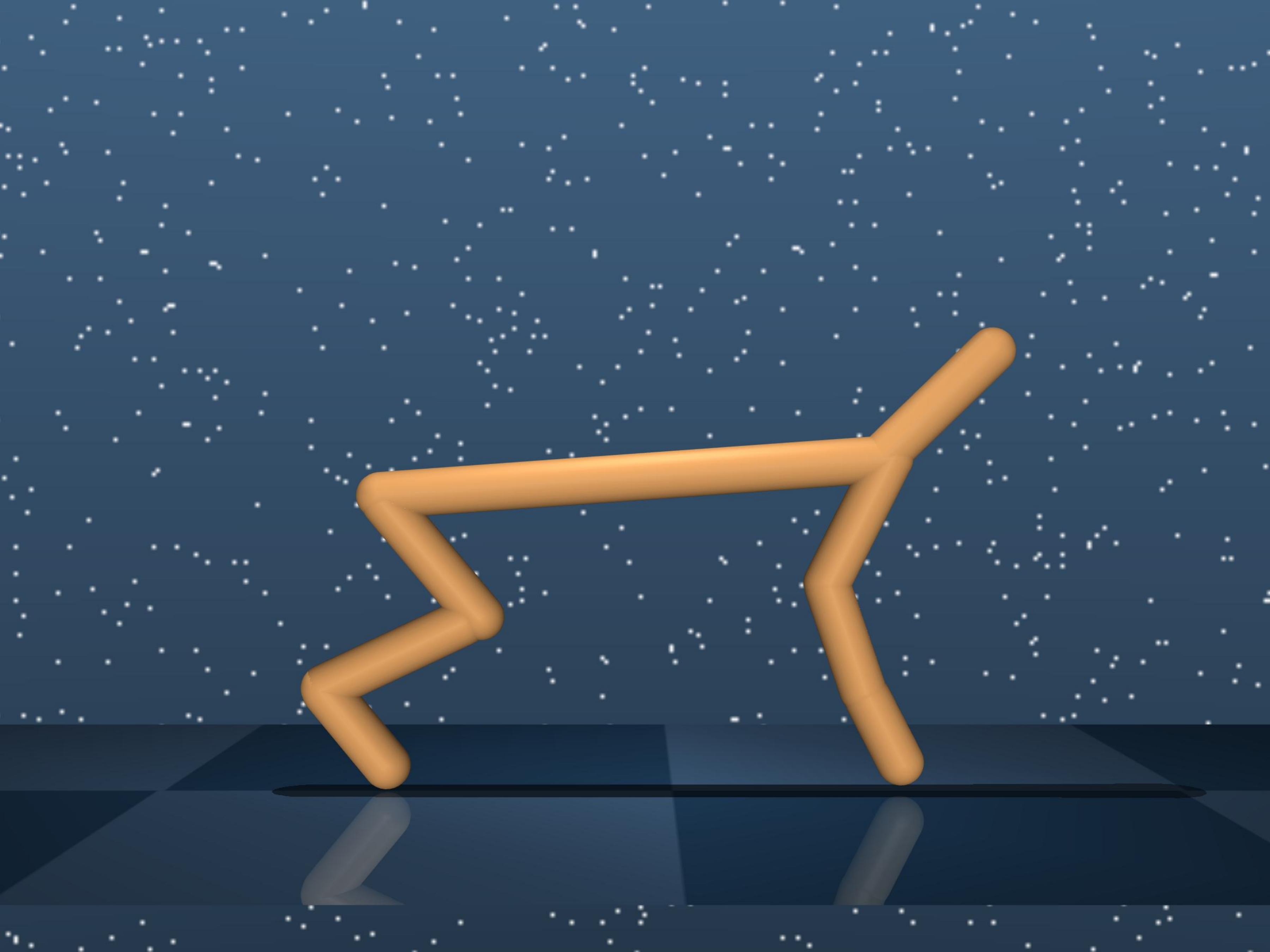}}
& \subfloat[Walker]{\includegraphics[width=0.27\linewidth, height=0.18\linewidth]{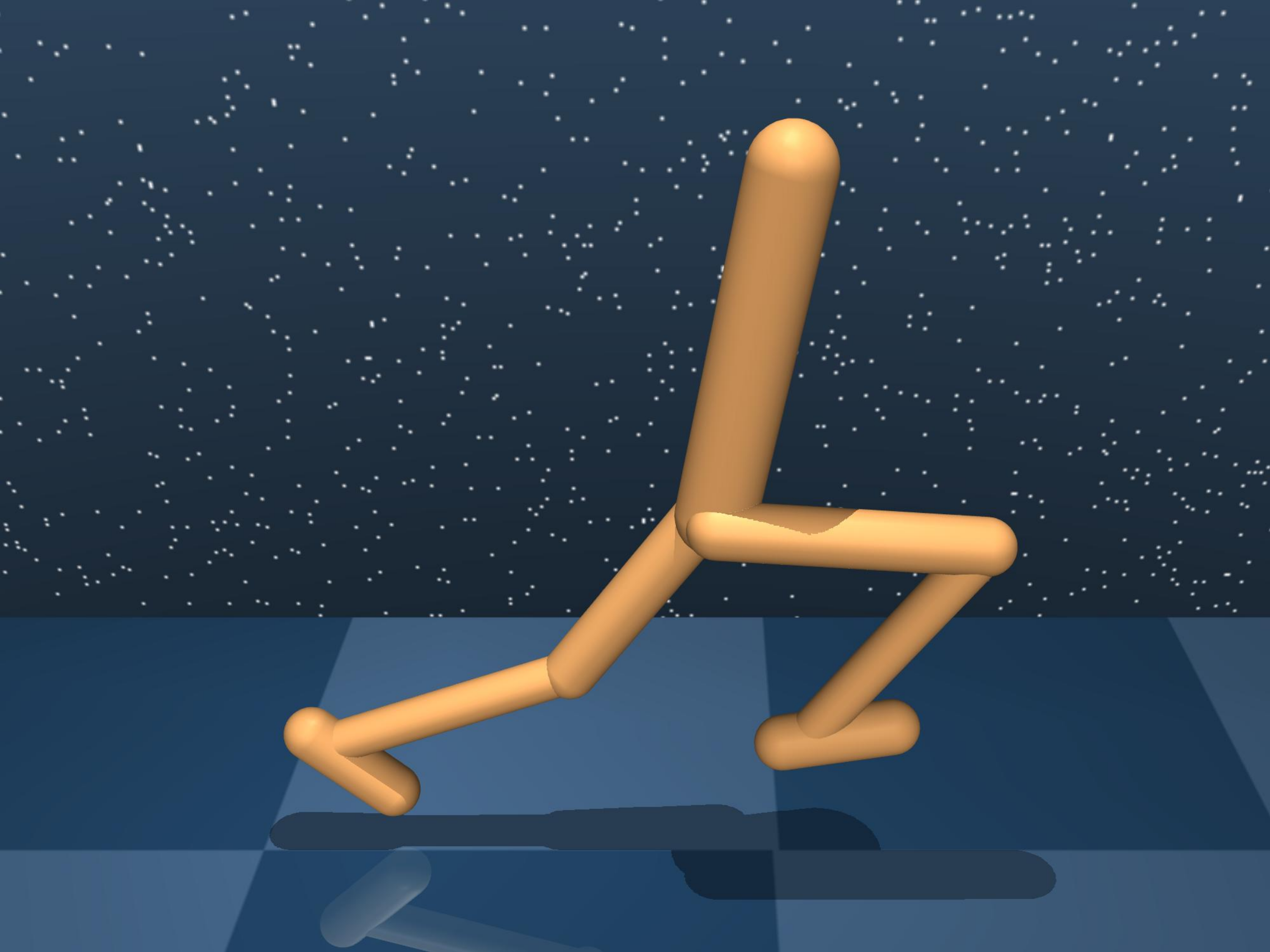}}
& \subfloat[Quadruped]{\includegraphics[width=0.27\linewidth, height=0.18\linewidth]{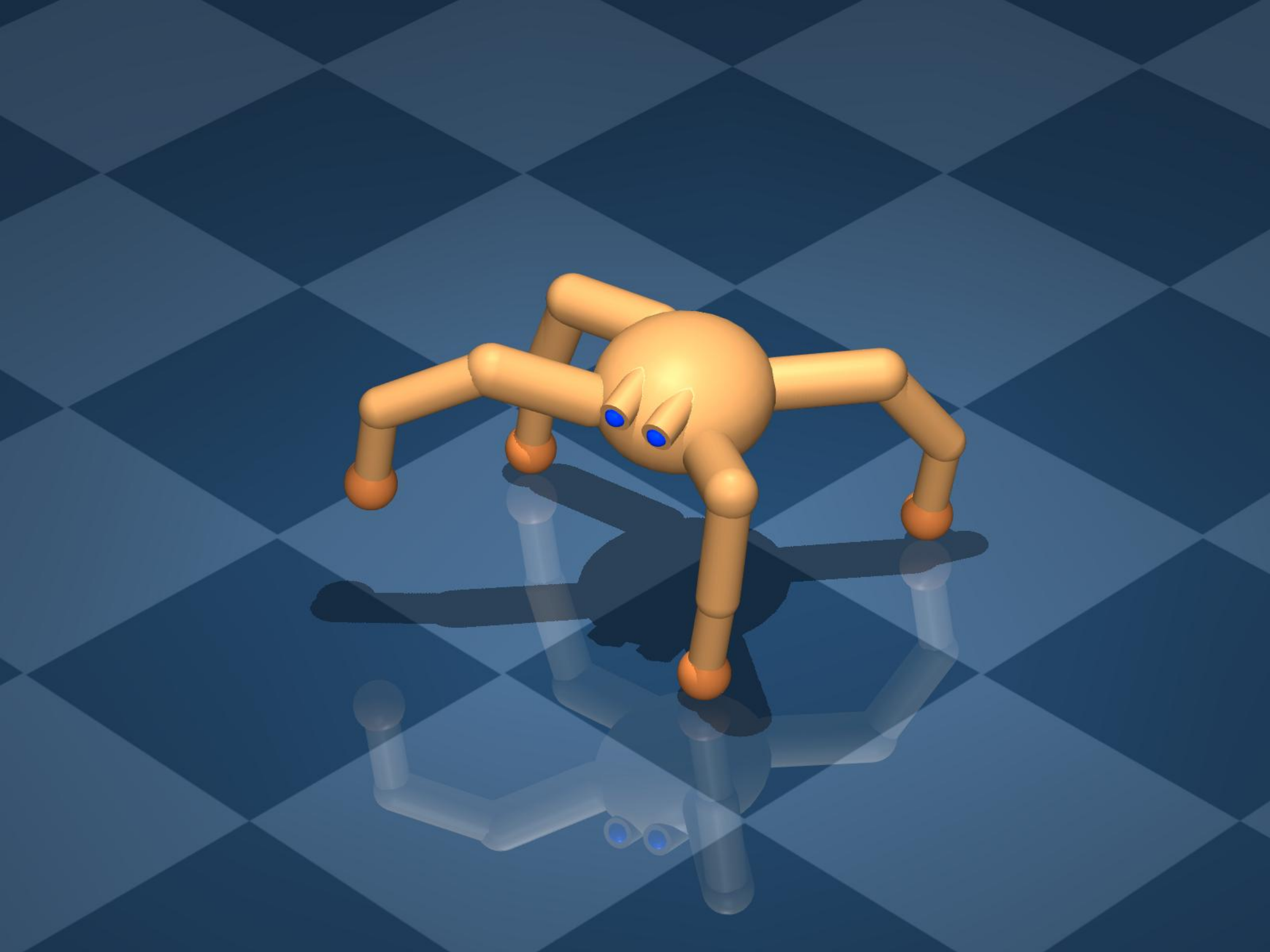}} \\
\subfloat[Button Press]{\includegraphics[width=0.27\linewidth, height=0.18\linewidth]
{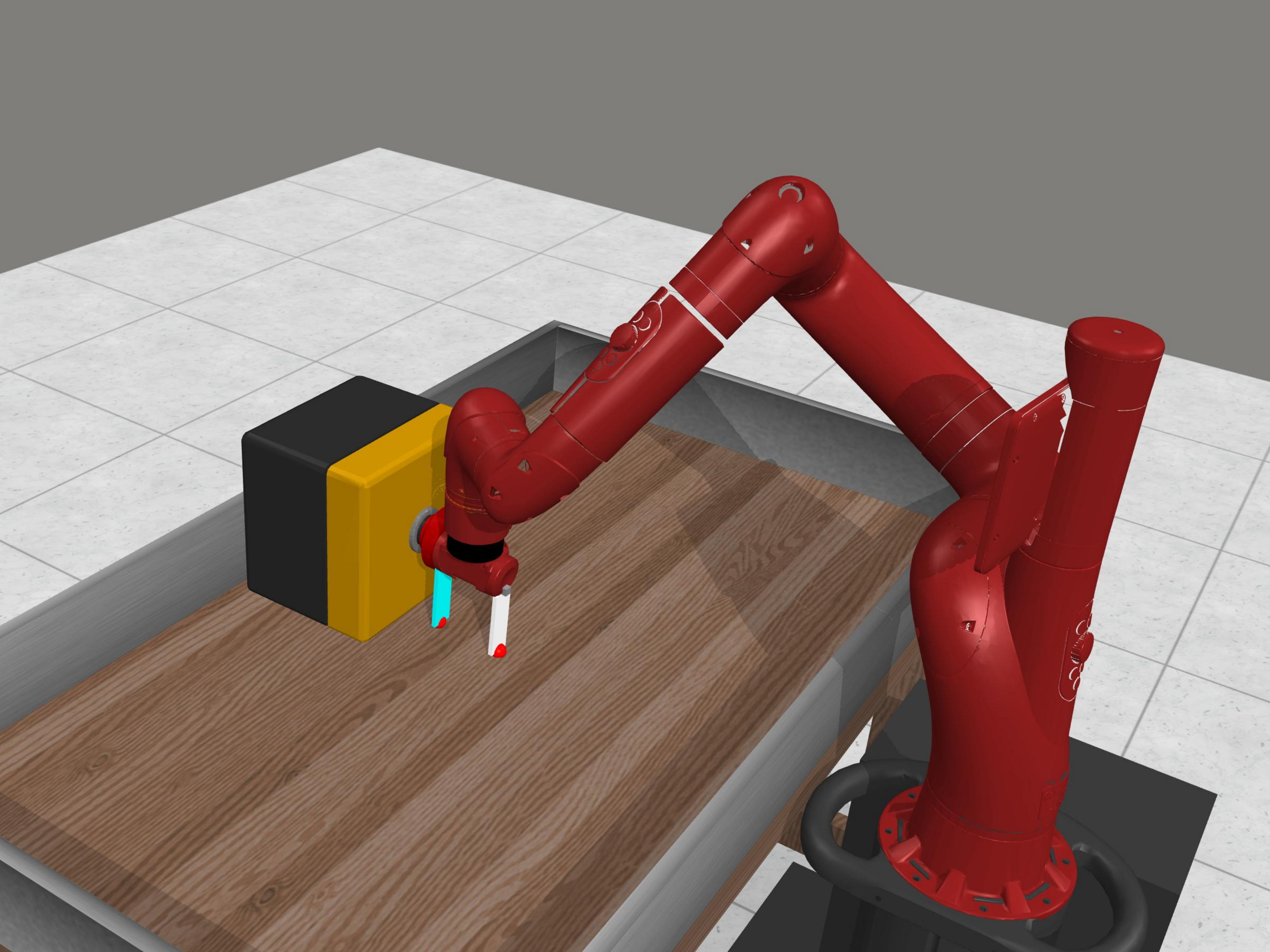}}
& \subfloat[Window Open]{\includegraphics[width=0.27\linewidth, height=0.18\linewidth]{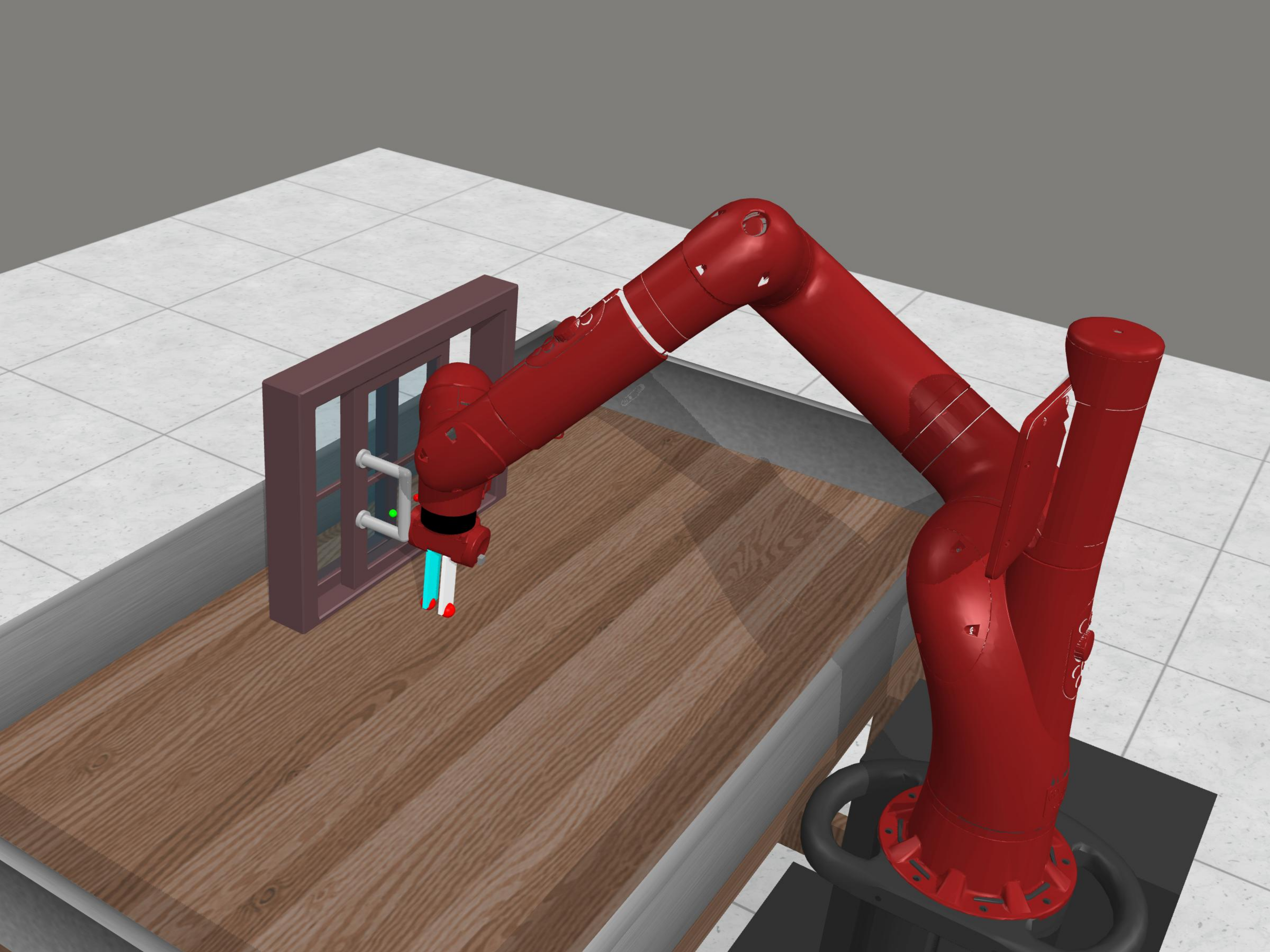}}
&\subfloat[Sweep Into]{\includegraphics[width=0.27\linewidth, height=0.18\linewidth]{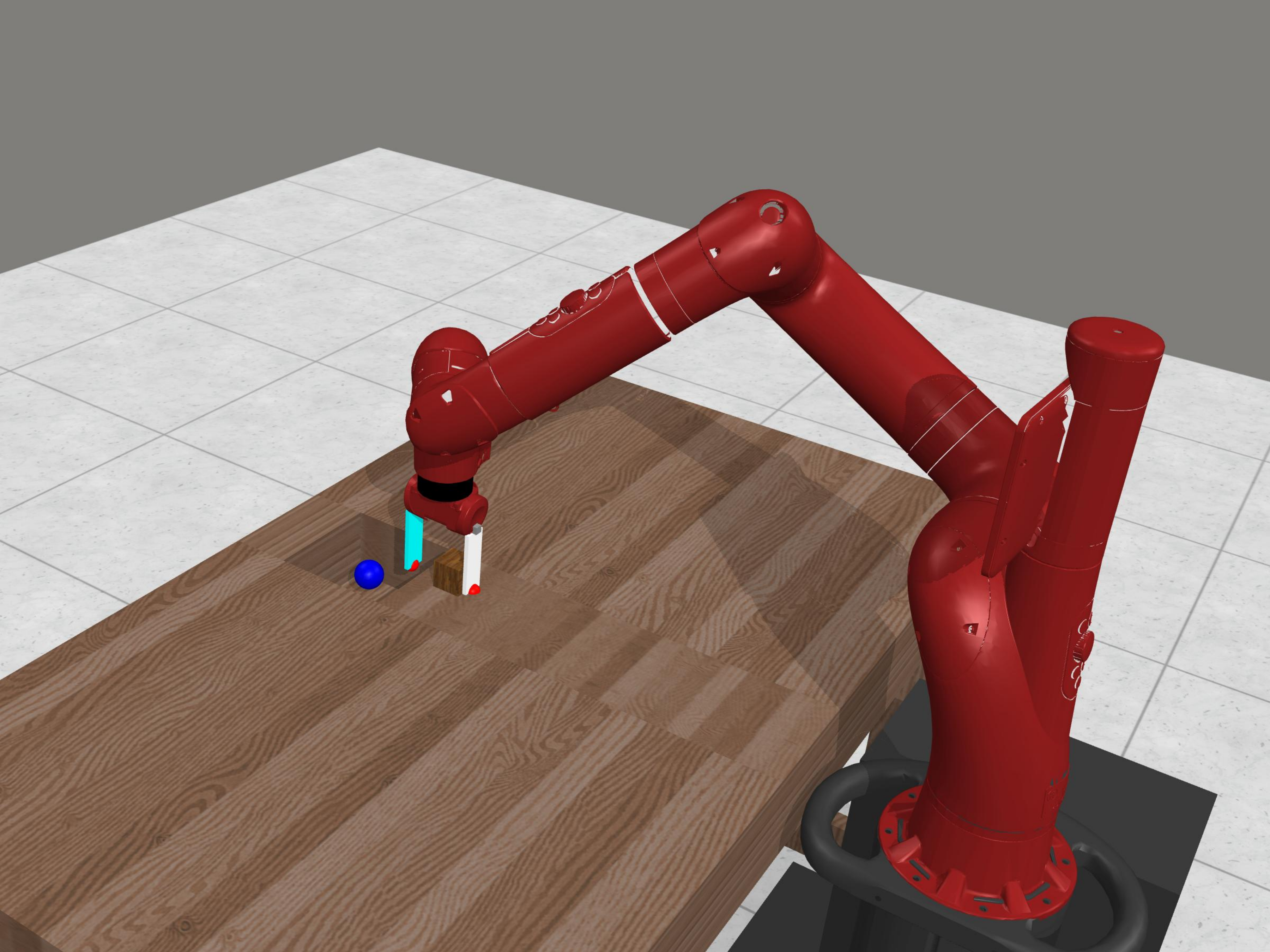}}
\end{tabular}
\caption{Six tasks are used for experiments. (a-c) DMControl tasks. (d-f) Meta-world tasks.}
\label{fig:render_tasks}
\end{figure*}

\subsection{D4RL benchmark}
\textbf{AntMaze}.
AntMaze involves a navigation challenge where a Mujoco Ant robot must locate and reach a specified goal. The data for this task is derived from a pre-trained policy designed to navigate various maze layouts, primarily focusing on two configurations: \emph{medium} and \emph{large}. The data are compiled using two distinct strategies: \emph{diverse} and \emph{play}. \emph{Diverse} data sets are produced by the pre-trained policy, which utilizes random start and goal locations, whereas \emph{play} data sets are generated with specific, intentionally selected goal locations. The task's reward structure is sparse, awarding points only when the robot is within a predefined proximity to the goal; otherwise, no reward is granted.

\textbf{Gym-Mujoco Locomotion}.
In Gym-Mujoco locomotion tasks, the objective is to manage simulated robots (Walker2d, Hopper) to advance forward while minimizing energy expenditure (action norm) to ensure safe behavior. Two data generation methodologies are employed: \emph{medium-expert} and \emph{medium-replay}. The {emph{medium-expert} data sets have an equal mix of expert demonstrations and suboptimal (partially-trained) demonstrations. The {emph{medium-replay} data sets come from the replay buffer of a partially-trained policy. The robot's forward velocity, a control penalty, and a survival bonus all contribute to determining the task's reward.

\textbf{Robosuite Robotic Manipulation}.
In Robosuite's robotic manipulation tasks \citep{zhu2020robosuite}, various 7-DoF simulated hand robots perform distinct tasks. Our experiments utilize environments simulated with Panda by Franka Emika, focusing on two specific tasks: lifting a cube (\emph{lift}) and relocating a coke can from a table to a designated bin (\emph{can}). Data collection involves inputs from either a single proficient teleoperator (\emph{ph}) or six teleoperators with varying skill levels (\emph{mh}). The task's reward is sparse, with further details deferred to the original paper.

\section{Additional Experiments}
% \subsection{Result Summary}
% \input{tables/summary_1}
\subsection{Ablation Study}
\noindent \textbf{Impact on Preference Amounts.} To evaluate how the quantity of preferences influences the performance of \ourmethod, we carry out an additional experiment to evaluate SEER's efficacy with varying amounts of human preferences. We consider the number of preferences $N \in \{50,100,300,500 \}$ on Sokoban tasks. The training curves of the average episode return of all methods on tasks are in Figure~\ref{fig:main_abla_frequence}. The results indicate that the performance of the policy gradually improves as the number of preference labels increases. When provided with sufficient preference labels, \ourmethod can approach the performance upper bound. The learning curves depicting the average episode return for all methods across tasks are presented in Figure~\ref{fig:main_abla_frequence}. The results suggest that as the number of preference labels increases, there's a corresponding improvement in the policy's performance. As the provision of preference labels increases, \ourmethod is capable of approaching optimal performance for each task.
\begin{figure*}[!ht]
\centering
\begin{center}
  \includegraphics[width=0.94\linewidth]{figures/results/main_header.pdf}
  \vspace{-1.2em}
\end{center}
\begin{tabular}{ccccc}
\hspace*{-2.4em}
\rotatebox{90}{\qquad\qquad \ \ Push-5x5-1}
& \hspace*{-1.25em} \subfloat[feedback=50]{\includegraphics[width=0.23\linewidth]{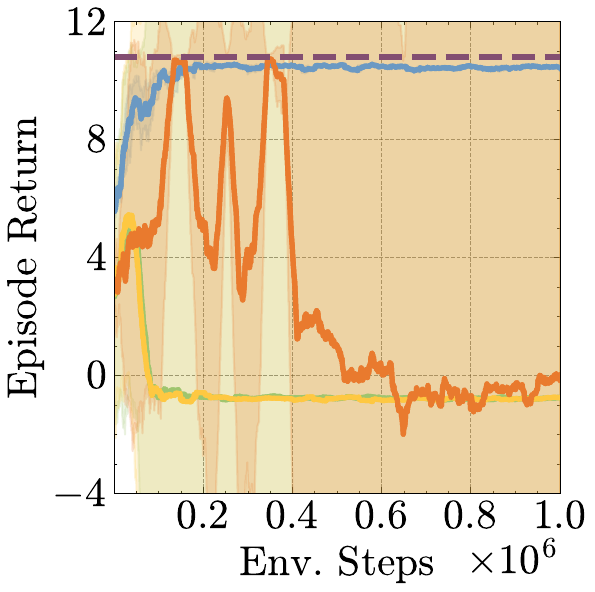}}
& \hspace*{-1.65em} \subfloat[feedback=100]{\includegraphics[width=0.23\linewidth]{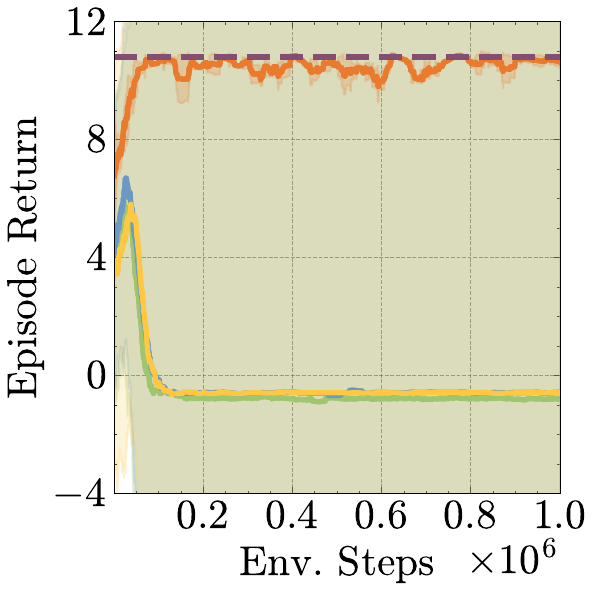}}
& \hspace*{-1.65em} \subfloat[feedback=300]{\includegraphics[width=0.23\linewidth]{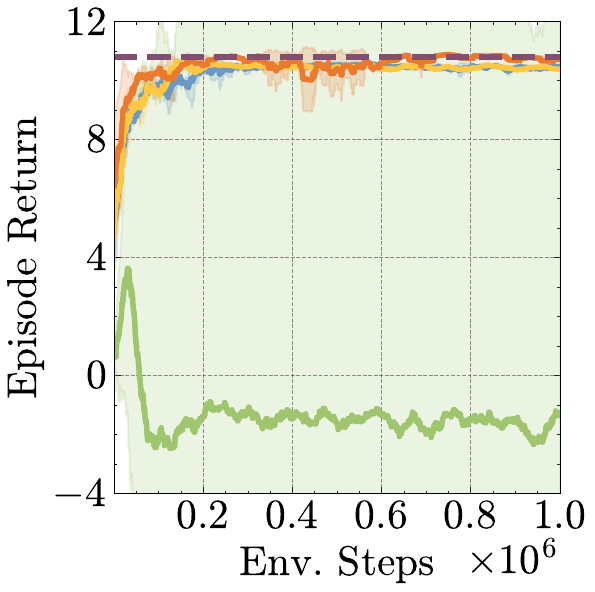}}
& \hspace*{-1.65em} \subfloat[feedback=500]{\includegraphics[width=0.23\linewidth]{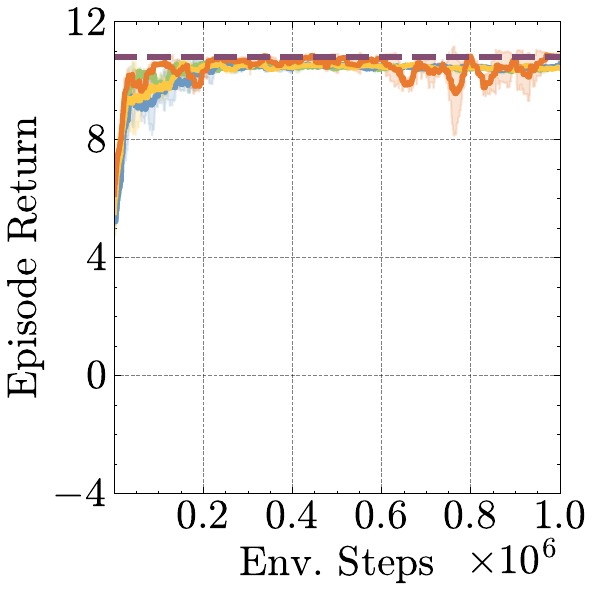}}
\end{tabular}
\caption{Training curves of all methods with varying numbers of preference labels on Push-5x5-1. The solid line presents the mean values, and the shaded area denotes the standard deviations over five runs.}
\label{fig:main_abla_frequence}
\end{figure*}

\noindent \textbf{Computational Efficiency of \ourmethod.}
When developing algorithms for improved performance and sample efficiency, it is also important to consider the training time overhead in real-world applications. In discrete settings, vertices are stored in a dictionary using their hashes as keys, enabling state retrieval in $\mathcal{O}(1)$ time. Moreover, updates are only required at the end of each episode. In continuous settings, \ourmethod allows the actor to update at a lower frequency, thus not significantly increasing computational costs. To quantitatively demonstrate the time overhead of all methods, we summarize the training time (hour) of \ourmethod and baseline methods on Push-7x7-2 and Cheetah Run in the table below. As shown in Table \ref{tab:time_cost}, compared to the backbone algorithm PEBBLE, \ourmethod does not significantly increase training time overhead, whereas SURF incurs a noticeable increase in time due to the need to label samples.

{
\setlength{\extrarowheight}{1.2pt}
\begin{table}[htbp]
\centering
\caption{Training time (hour) of various tasks.}
\label{tab:time_cost}
\begin{tabular}{ccccc}
\hline
Task/Methods & PEBBLE & SURF & MRN  & \ourmethod (ours) \\ \hline
Push-7x7-2   & 2.78   & 3.56 (+28.1\%) & 3.11 (+11.9\%) & 2.84 (\textbf{+2.2\%})\\
Cheetah Run  & 3.23   & 3.83 (+18.6\%) & 3.42 (+5.9\%)  & 3.31 (\textbf{+2.5\%})\\ \hline
\end{tabular}
\end{table}
}

\noindent \textbf{Sensitivity Analysis of the Parameter $\eta$.}
Policy regularization is an important technique in \ourmethod. 
In our experiments, we did a parameter search for $\eta$ and chose a robust value in the middle of the search space. The table below displays the performance of \ourmethod on Cheetah Run with various values of $\eta$. In all our experiments conducted in online settings, we set $\eta$ to 6, as shown in Table \ref{table:hyperparameters_pebble}.
{
\setlength{\extrarowheight}{1.2pt}
\begin{table}[htbp]
\centering
\caption{Parameter search for policy regularization.}
\label{tab:time_cost}
\begin{tabular}{cccccc}
\hline
$\eta$      & 0    & 0.1  & 1.0  & 5.0  & 10.0 \\ \hline
Cheetah Run &542.1 & 579.7&	697.3&710.7 &713.5 \\ \hline
\end{tabular}
\end{table}
}

\subsection{Human Experiments}
We demonstrate that agents can perform various novel behaviors based on human feedback using \ourmethod, as shown in Figure \ref{fig:human}. Specifically, we illustrate a cheetah running solely on its hind legs, guided by 200 preference labels. In offline settings, all tasks utilize feedback provided by a human instructor.

\begin{figure}[htbp]
    \centering
    \includegraphics[width=0.9\linewidth]{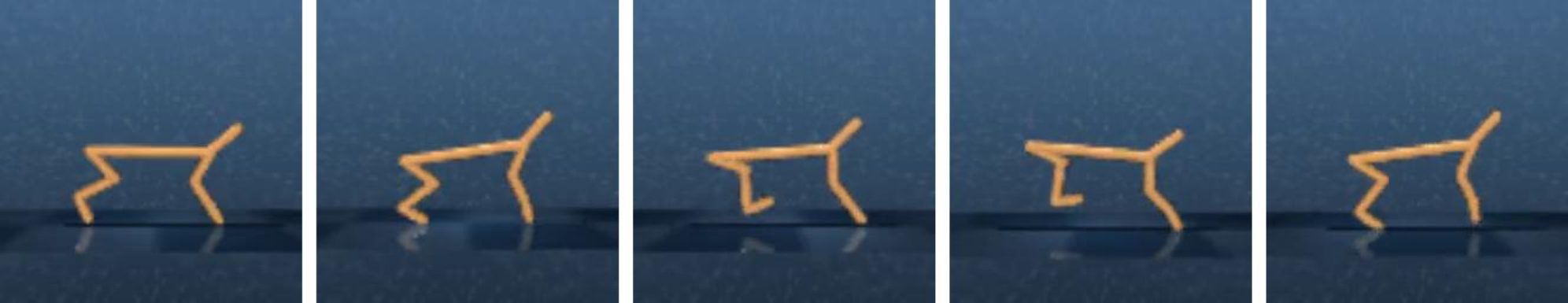}
    \caption{A cheetah running solely on its hind legs.}
    \label{fig:human}
\end{figure}

\section{Full Experimental results}
We present all the results of the discrete tasks.
\begin{figure*}[!ht]
\centering
\begin{center}
  \includegraphics[width=0.95\linewidth]{figures/results/main_header.pdf}
  \vspace{-1.2em}
\end{center}
\begin{tabular}{ccc}
\hspace*{-0.8em}
  \hspace*{-1.1em} \subfloat[Push-5x5-1 (feedback=300) ]{\includegraphics[width=0.31\linewidth]{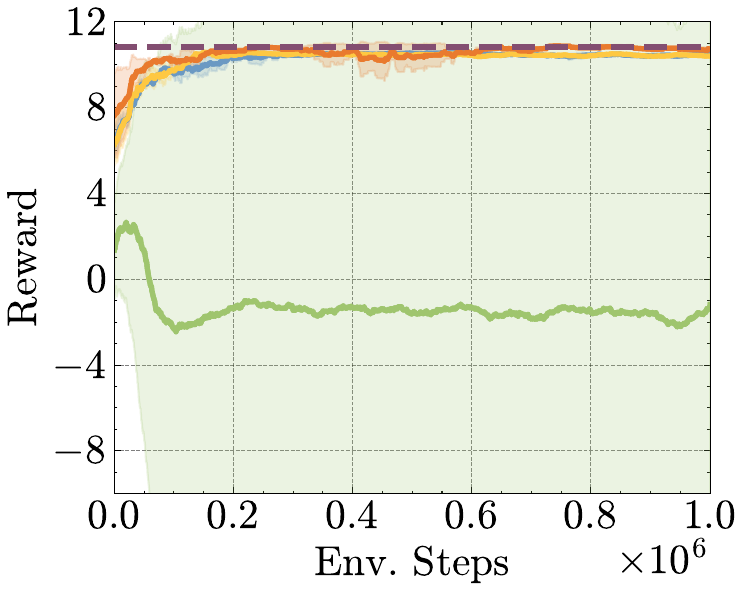}}
& \hspace*{-1.1em} \subfloat[Push-6x6-1 (feedback=300) ]{\includegraphics[width=0.31\linewidth]{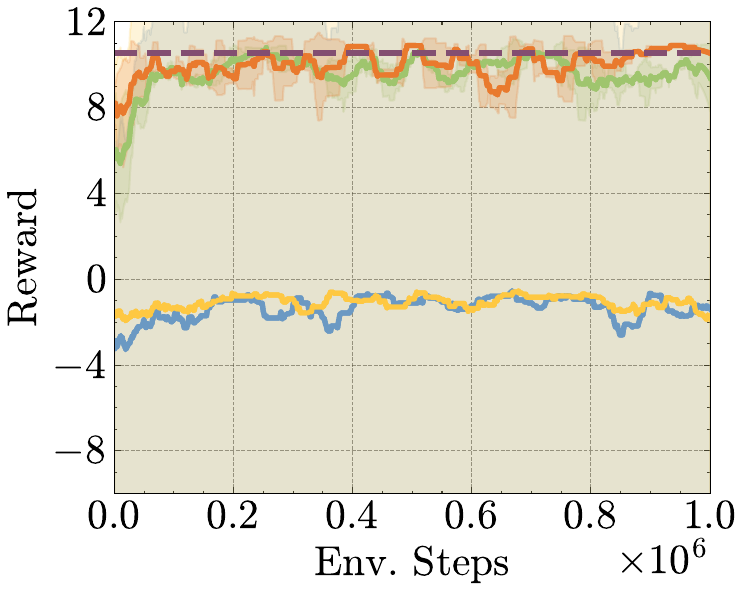}}
& \hspace*{-1.1em} \subfloat[Push-7x7-1 (feedback=300) ]{\includegraphics[width=0.31\linewidth]{figures/results/discrete/Sokoban771_300.pdf}}
\vspace*{-0.5em} \\
\hspace*{-0.8em}
  \hspace*{-1.1em} \subfloat[Push-5x5-2 (feedback=1000)]{\includegraphics[width=0.31\linewidth]{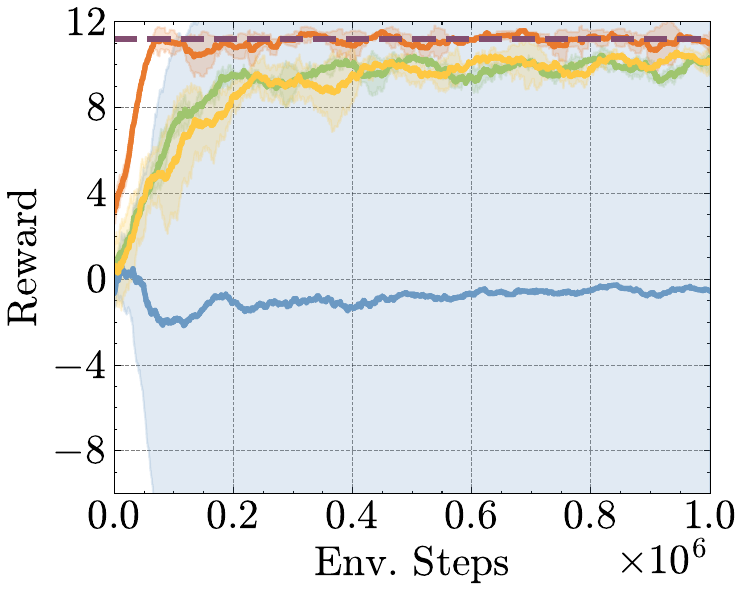}}
& \hspace*{-1.1em} \subfloat[Push-6x6-2 (feedback=1000)]{\includegraphics[width=0.31\linewidth]{figures/results/discrete/Sokoban662_1000.pdf}}
& \hspace*{-1.1em} \subfloat[Push-7x7-2 (feedback=1000)]{\includegraphics[width=0.31\linewidth]{figures/results/discrete/Sokoban772_1000.pdf}}
\end{tabular}
\caption{Training curves of various methods on six puzzle-solving tasks from Sokoban. The solid line presents the mean values, and the shaded area denotes the standard deviations over five runs. }
\label{fig:main_result_sokoban}
\end{figure*}

\section{Impact Statements}
Preference-based reinforcement learning allows us to tackle a wide range of problems and achieve robust AI without the need for reward engineering. However, it also poses potential risks, such as a malicious user manipulating preferences to instill harmful behaviors in the agent. Our method enhances the efficiency of preference-based RL, thereby simplifying the process of teaching both desirable and undesirable behaviors. When developing real-world applications, in addition to performance, safety is also a crucial consideration.

% \section{Appendix / supplemental material}

% Optionally include supplemental material (complete proofs, additional experiments and plots) in appendix.
% All such materials \textbf{SHOULD be included in the main submission.}

%%%%%%%%%%%%%%%%%%%%%%%%%%%%%%%%%%%%%%%%%%%%%%%%%%%%%%%%%%%%
\end{document}